\newcommand{\func}{\pi}
\newtheorem{lemma}{Lemma}
\newtheorem{definition}{Definition}
\newtheorem{theorem}{Theorem}
\newtheorem{corollary}{Corollary}
\newtheorem{remark}{Remark}
\newtheorem{assumption}{Assumption}
\newcommand{\newreptheorem}[2]{\newtheorem*{rep@#1}{\rep@title}
	\newenvironment{rep#1}[1]{\def\rep@title{#2 \ref*{##1}}\begin{rep@#1}}{\end{rep@#1}}
}
\icmltitlerunning{Efficient learning of smooth probability functions from Bernoulli tests with guarantees}
\begin{document}

\twocolumn[
\icmltitle{Efficient learning of smooth probability functions from Bernoulli tests \\with guarantees}




\begin{icmlauthorlist}
\icmlauthor{Paul Rolland}{epfl}
\icmlauthor{Ali Kavis}{epfl}
\icmlauthor{Alex Immer}{epfl}
\icmlauthor{Adish Singla}{mpi-sws}
\icmlauthor{Volkan Cevher}{epfl}
\end{icmlauthorlist}

\icmlaffiliation{epfl}{Ecole Polytechnique F\'ed\'erale de Lausanne, Switzerland}
\icmlaffiliation{mpi-sws}{Max Planck Institute for Software Systems, Saarbr\"ucken, Germany}

\icmlcorrespondingauthor{Paul Rolland}{paul.rolland@epfl.ch}

\icmlkeywords{Machine Learning, ICML}

\vskip 0.3in
]



\printAffiliationsAndNotice{}  


\begin{abstract}
\vspace{2mm}
We study the fundamental problem of learning an unknown, smooth probability function via point-wise Bernoulli tests. We provide a scalable algorithm for efficiently solving this problem with rigorous guarantees. In particular, we prove the convergence rate of our posterior update rule to the true probability function in $L_2$-norm. Moreover, we allow the Bernoulli tests to depend on contextual features and provide a modified inference engine with provable guarantees for this novel setting. Numerical results show that the empirical convergence rates match the theory, and illustrate the superiority of our approach in handling contextual features over the state-of-the-art.

\end{abstract}


\section{Introduction}\label{sec.introduction}

One of the central challenges in machine learning relates to learning a continuous probability function from point-wise Bernoulli tests \cite{casella2002statistical,johnson2002multivariate}. Examples include, but are not limited to, clinical trials \cite{dersimonian1986meta}, recommendation systems \cite{mcnee2003interfaces}, sponsored search \cite{pandey2007handling}, and binary classification. Due to the curse of dimensionality, we often require a large number of tests in order to obtain an accurate approximation of the target function. It is thus  necessary to use a method that scalably constructs this approximation    with the number of tests.


A  widely used method for efficiently solving this problem is the Logistic Gaussian Process (LGP) algorithm \cite{tokdar2007posterior}. While this algorithm has no clear provable guarantees, it is shown to be very efficient in practice in approximating the target function. However, the time required for inferring the posterior distribution at some point grows cubicly with the number of tests, and can thus be inapplicable when the amount of data becomes large. There has been extensive work to resolve this cubic complexity associated with GP computations~\cite{rasmussen2004gaussian}. However, these methods require additional approximations on the posterior distribution, which impacts the efficiency and make the overall algorithm even more complicated, leading to further difficulties in establishing theoretical convergence guarantees.

Recently, \citet{goetschalckx2011continuous} tackled the issues encountered by LGP, and proposed a scalable inference engine based on Beta Processes called Continuous Correlated Beta Process (CCBP) for approximating the probability function. By scalable, we mean the algorithm complexity scales linearly with the number of tests. However, no theoretical analysis is provided, and the approximation error saturates as the number of tests becomes large (\textit{cf.}, section~\ref{sec.experiments-synthetic}). Hence, it is unclear whether provable convergence and scalability can be obtained simultaneously. 

This paper bridges this gap by designing a simple and scalable method for efficiently approximating the probability functions with provable convergence. Our algorithm constructs a posterior distribution that allows inference in linear time (w.r.t. the number of tests) and converges in $L_2$-norm to the true probability function (uniformly over the feature space), see Theorem~\ref{Static-convergence-theorem}.

In addition, we also allow the Bernoulli tests to depend on contextual parameters influencing the success probabilities. To ensure convergence of the approximation, these features need to be taken into account in the inference engine. We thus provide the first algorithm that efficiently treats these contextual features while performing inference, and retain guarantees. As a motivation for this setting, we demonstrate how this algorithm can efficiently be used for treating bias in the data \cite{agarwal2018reductions}.

\subsection{Basic model and the challenge}
In its basic form, we seek to learn an unknown, smooth function $\func:\mathcal{X} \rightarrow [0,1]$, $\mathcal{X} \subset \mathbb{R}^d$ from point-wise Bernoulli tests, where $d$ is the features space dimension. We model such tests as $s_i \sim \text{Bernoulli}(\func(x_i)),$ where $\sim$ means \textit{distributed as}, $x_i\in \mathcal{X}$, and we model our knowledge of $\func$ at point $x$ via a random variable $\tilde{\func}(x)$.  

Without additional assumptions, this problem is clearly hard, since experiments are performed only at points $\{x_i\}_{i=1,...,t}$, which constitute a negligible fraction of the space $\mathcal{X}$. In this paper, we will make the following assumption about the probability function:
\begin{assumption}
The function $\func$ is L-Lipschitz continuous, i.e., there exists a constant $L \in \mathbb{R}$ such that 
\begin{equation}
|\func(x) - \func(y)| \leq L \|x - y\|
\label{Lipschitz}
\end{equation}
$\forall x,y \in \mathcal{X}$ for some norm $\|.\|$ over $\mathcal{X}$.
\end{assumption}
In order to ensure convergence of $\tilde{\func}(x)$ to $\pi(x)$ for all $x\in \mathcal{X}$, we must design a way of sharing experience among variables using this smoothness assumption.

Our work uses a prior for $\func$ based on the Beta distribution and designs a simple sharing scheme to provably ensure convergence of the posterior.

\paragraph{Dynamic setting} In a more generic setting that we call ``dynamic setting," we assume that each Bernoulli test can be linearly influenced by some contextual features. Each experiment is then described by a quadruplet $S_i=(x_i, s_i, A_i, B_i)$ and we study the following simple model for its probability of success:
\begin{align}
Pr(s_{i} = 1) := A_i \func(x_i) + B_i. \label{eq.exp-dynamic}
\end{align}
We have to restrict $0 \leq B_i \leq 1$, $0 \leq A_i+B_i \leq 1$  to ensure that this quantity remains a probability given that $\func(x_i)$ lies in $[0,1]$. We assume that we have knowledge of estimates for $A_i$ and $B_i$ in expectation. 


Such contextual features naturally arise in real applications \cite{krause2011contextual}. For example, in the case of clinical trials \cite{dersimonian1986meta}, the goal is to learn the patient's probability of succeeding at an exercise with a given difficulty. A possible contextual feature can then be the state of fatigue of the patient, which can influence its success probability. Here, LGP algorithm could be used, but the contextual feature must be added as an additional parameter. We show that, if we know how this feature influences the Bernoulli tests, then we can achieve faster convergence.

\subsection{Our contributions} 
We summarize our contributions as follows: \\[-5mm]
\begin{enumerate}
\item We provide the first theoretical guarantees for the problem of learning a smooth probability function over a compact space using Beta Processes. \\[-5mm]
\item We provide an \textbf{efficient} and \textbf{scalable} algorithm that is able to handle contextual parameters explicitly influencing the probability function.\\[-5mm]
\item We demonstrate the efficiency of our method on synthetic data, and observe the benefit of treating contextual features in the inference. We also present a real-world application of our model.
\end{enumerate}

\paragraph{Roadmap} 
We  analyze the simple setting without contextual features (referred to as \textit{static} setting). We start by designing a Bayesian update rule for point-wise inference, and then include experience sharing in order to ensure $L_2$ convergence over the whole space with a provable rate. We then treat the dynamic setting in the same way, and finally demonstrate our theoretical findings via extensive simulations and on a case-study of clinical trials for rehabilitation (\textit{cf.}, Section~\ref{sec.experiments}).

\section{Related Work}\label{sec.related}

\paragraph{Correlated inference via GPs}
The idea of sharing experience of experiments between points with similar target function value is inspired by what was done with Gaussian Processes (GPs) \cite{williams1996gaussian}. GPs essentially define a prior over real-valued functions defined on a continuous space, and use a kernel function that represents how experiments performed at different points in the space are correlated.

GP-based models are not directly applicable to our problem setting given that our function $\func$ represents probabilities in the range $[0,1]$. For our problem setting, a popular approach is Logistic Gaussian Processes (LGP) \cite{tokdar2007posterior}---it learns an intermediate GP over the space $\mathcal{X}$ which is then squashed to the range $[0,1]$ via a logistic transformation. Experience sharing is then done by modeling the covariance between tests performed at different points through a predefined kernel. This allows constructing a covariance matrix between test points, which can be used to estimate the posterior distribution at any other sample point.


Gaussian Copula Processes (GCP) \cite{wilson2010copula} is another GP-based approach that learns a GP and uses a copula to map it to Beta distributions over the space.

More recently, Ranganath and Blei \cite{ranganath2017correlated} explored correlated random measures including correlated Beta-Bernoulli extension. However, GPs are still used in order to define these correlations.

There are at least two key limitations with these ``indirect" approaches: First, the posterior distributions after observing a Bernoulli outcome is analytically intractable, and needs to be approximated, e.g. using Laplace approximation \cite{tokdar2007posterior}. Second, the time complexity of prediction grows cubicly $\mathcal{O}(t^3)$ with respect to the number of samples $t$. There is extensive work to resolve this cubic complexity associated to GP computations~\cite{rasmussen2004gaussian}. However, these methods require additional approximations on the posterior distribution, which impacts the efficiency, and make the overall algorithm even more complicated, leading to further difficulties in establishing theoretical guarantees.

Methods based on GPs that take context variables into account have also been designed~\cite{krause2011contextual}. However, they simply allow for the use of specific kernels for these variables and still require an increase in the feature space dimension. In this work, by directly modifying the inference process, we compute a posterior that takes into account contextual features without increasing the feature space dimension.

\textbf{Correlated Beta Processes \ \ }
In contrast to GPs, it is very challenging to define correlated Beta distributions. The first work introducing a Beta process without using GPs is the one of Hjort (1990) \cite{hjort1990nonparametric}, but lacks the correlation aspect. Some other works studied multi-variate Beta distributions for simple settings considering only a few variables \cite{gupta1985three,olkin2003bivariate}.

Goetschalckx et al. (2011) \cite{goetschalckx2011continuous} proposed an approach named Continuous Correlated Beta Processes (CCBP) to deal with a continuous space of Beta distributions and to share experience between them via using a kernel. CCBP is shown to achieve results comparable to the state of the art approach based on LGP. Furthermore, it is shown that CCBP is much more time efficient---linear $\mathcal{O}(t)$ runtime  for CCBP in comparison to cubic $\mathcal{O}(t^3)$ runtime of GP-based methods.

CCBP approach has been used in several real-world application settings, e.g., for learning patient's fitness function in rehabilitation \cite{goetschalckx2011continuous}, learning the wandering behavior of people with dementia \cite{hoey2012modeling}, and in the application of analyzing genetic ancestry in admixed populations \cite{gompert2016continuous}.

However, the method presented in \cite{goetschalckx2011continuous}, by simply using a heuristic kernel, gives an approximation which does not converge to the target function as the number of samples increases. In order for the method to converge, this kernel must depend on the number of samples. In this paper, we provide an explicit kernel to use which ensures convergence of the approximated probability function to the target function with provable rate.



%
%










\section{Inference for the static setting}\label{sec:inference.static}

We start by analyzing the static setting, in which no contextual features influence the Bernoulli tests. We first design a Bayesian update rule for point-wise inference, then we propose an experience sharing method and prove convergence guarantees.

\subsection{Uncorrelated case: a Bayesian approach}

Suppose we do not use the smoothness assumption of $\func$. Then a naive solution is to model each random variable $\tilde{\func}(x)$ by the conjugate prior of the Bernoulli distribution, which is the Beta distribution. Then, starting from a prior $\tilde{\func}(x) \sim \text{Beta}(\alpha(x), \beta(x)) \ \forall x\in \mathcal{X}$, the Bayesian posterior $\tilde{\func}(x|\mathcal{S})$ conditioned on $\mathcal{S} = \{ (x_i, s_i) \}_{i=1,...,t}$ is:
\vspace{-2mm}
\begin{align*}
\tilde{\func}(x|\mathcal{S}) \sim  \text{Beta}&\left(\alpha(x) + \sum_{i=1}^t \delta_{s_i = 1}\delta_{x_i=x}, \right.\\
&\left.\beta(x) + \sum_{i=1}^t \delta_{s_i = 0}\delta_{x_i=x}\right)
\end{align*}
where $\delta_a$ is the Kronecker delta. 

This particular update scheme does not take smoothness assumption of the function into account and any experiment $S_i = (x_i, s_i)$ only influences the corresponding random variable $\tilde{\func}(x_i)$. In particular, if no experiment is performed at $x$, then our belief of $\func(x)$ remains unchanged. It is thus necessary to make use of the smoothness assumption.
\subsection{Leveraging smoothness of $\func$ via experience sharing}

\citet{goetschalckx2011continuous} propose a mechanism of experience sharing among correlated variables. To this purpose, they introduce a kernel $K:\mathcal{X} \times \mathcal{X} \rightarrow [0,1]$ where $K(x,x_i)$ indicates to what extent the experience for experiment at $x_i$ should be shared with any other point $x$. Indeed, thanks to the Lipschitz continuity assumption~\eqref{Lipschitz}, we expect close points to have similar probabilities. However, although the Beta distribution is the conjugate prior of the Bernoulli distribution, this conjugacy does not hold anymore when we use experience sharing. Instead of using the Bayesian posterior, we use the following update rule:
\begin{equation}
\begin{split}
 \tilde{\func}(x | \mathcal{S}) \sim  \text{Beta}&\left(\alpha(x) + \sum_{i=1}^t \delta_{s_i = 1}K(x, x_i), \right. \\
 &\left. \beta(x) + \sum_{i=1}^t \delta_{s_i = 0}K(x, x_i)\right).
 \end{split}
 \label{UpdateRule-static}
\end{equation}

With this update rule, the result of experiment $S_i$ influences all variables $\tilde{\func}(x)$ for which $K(x,x_i) > 0$, and the magnitude of influence is proportional to $K(x,x_i)$. Note that this update rule is no more Bayesian. However, all existing methods, including LGP and GCP, also involve non-Bayesian updates.


In \citet{goetschalckx2011continuous}, authors do not specify any particular choice of kernel function and the selection process is left as a heuristic. We show that proper selection of kernel is essential for convergence to the true underlying distributions at all points. In particular,  to ensure convergence in $L_2$ norm, this kernel must shrink as the number of observations increases (Algorithm~\ref{algo-static-setting}). We can see that our algorithm, called Smooth Beta Process (SBP), allows for fast inference at any point $x \in \mathcal{X}$, since it simply requires to find the tests that are performed at most $\Delta$ far from $x$, and compute the posterior distribution as in~\eqref{UpdateRule-static} depending on the number of successes and failures within these tests.



\begin{algorithm}[t!]
   \caption{Smooth Beta Process (SBP)}
   \label{algo-static-setting}
\begin{algorithmic}
   \STATE {\bfseries Input:} experiments points and observations $\mathcal{S} = \{x_i, s_i\}_{i=1,..,t}$, query point $x \in \mathcal{X}$, prior knowledge $\tilde{\func}(x) \sim \mathcal{B}(\alpha(x), \beta(x))$
   \STATE {\bfseries Output:} Posterior distribution $\tilde{\func}(x|\mathcal{S})$
   \STATE 1. Set $\Delta \propto \frac{1}{t^{\frac{1}{d+2}}}$
   \STATE 2. Compute the posterior as in~\eqref{UpdateRule-static} using kernel $K(x,x') = \delta_{\|x-x'\|\leq \Delta}$
\end{algorithmic}
\end{algorithm}

\begin{remark}
The particular dependence of the kernel on the number of samples ensuring optimal convergence is not trivial, and can only be found via a theoretical analysis of the model.
\end{remark}

\begin{theorem}\label{Static-convergence-theorem}
Let $\func:[0,1]^d \rightarrow [0,1]$ be $L$-Lipschitz continuous. Suppose we measure the results of experiments $\mathcal{S} = \{(x_i, s_i)\}_{i=1,...,t}$ where $s_i$ is a sample from a Bernoulli distribution with parameter $\func(x_i)$. Experiment points $\{x_i\}_{i=1,...,t}$ are assumed to be i.i.d. and uniformly distributed over the space. Then, starting with a uniform prior $\alpha(x) = \beta(x) = 1 \ \forall x\in [0,1]^d$, the posterior $\tilde{\func}(x|\mathcal{S})$ obtained from SBP uniformly converges in $L_2$-norm to $\func(x)$, i.e.
\begin{equation}
\sup_{x \in [0,1]^d}\mathbb{E}_{\mathcal{S}} \left(\mathbb{E}\left((\tilde{\func}(x|\mathcal{S}) - \func(x))^2\right)\right) = \mathcal{O}\left(L^{\frac{2d}{d+2}}t^{-\frac{2}{d+2}}\right),
\end{equation}
where the outer expectation is performed over experiment points $\{x_i\}_{i=1,...,t}$ and their results $\{s_i\}_{i=1,...,t}$.  SBP also computes point-wise posterior in time $\mathcal{O}(t)$.
\end{theorem}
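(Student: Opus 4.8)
The plan is to reduce the statement to a classical bias--variance tradeoff for a local kernel estimator, since the indicator kernel $K(x,x')=\delta_{\|x-x'\|\le\Delta}$ turns the update rule~\eqref{UpdateRule-static} into the posterior $\mathrm{Beta}(1+N_1(x),\,1+N_0(x))$, where $N_1(x)$ and $N_0(x)$ count the successes and failures among experiments falling in the ball $B(x,\Delta)$ and $N(x)=N_1(x)+N_0(x)$. First I would split the inner expectation (over the Beta variable) by the standard bias--variance identity,
\[
\mathbb{E}\big[(\tilde\func(x|\mathcal S)-\func(x))^2\big]=\mathrm{Var}\big(\tilde\func(x|\mathcal S)\big)+\Big(\tfrac{1+N_1(x)}{2+N(x)}-\func(x)\Big)^2 .
\]
The Beta variance is controlled via AM--GM by $\mathrm{Var}\le\frac{1}{4(3+N(x))}$, so it only remains to understand the squared bias and the data-expectation of $1/(3+N(x))$.

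For the squared bias I would condition on the positions of the points lying in $B(x,\Delta)$ and use that, given these positions, $N_1(x)=\sum_{x_i\in B(x,\Delta)}s_i$ is a sum of independent $\mathrm{Bernoulli}(\func(x_i))$ variables. Writing $\bar\func=\frac1{N(x)}\sum_{x_i\in B(x,\Delta)}\func(x_i)$, the identity
\[
\frac{1+N_1(x)}{2+N(x)}-\func(x)=\frac{\big(N_1(x)-N(x)\bar\func\big)+N(x)\big(\bar\func-\func(x)\big)+\big(1-2\func(x)\big)}{2+N(x)}
\]
separates three contributions: a conditionally mean-zero fluctuation whose conditional second moment is $\mathrm{Var}(N_1\mid\mathrm{pos})/(2+N)^2\le\frac{1}{4(2+N)}$; the genuine smoothing bias, bounded deterministically by $|\bar\func-\func(x)|\le L\Delta$ through the Lipschitz assumption~\eqref{Lipschitz} (every $x_i\in B(x,\Delta)$ lies within $\Delta$ of $x$); and a lower-order prior term of size $O(1/N(x)^2)$. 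Squaring, using that the fluctuation is conditionally uncorrelated with the deterministic parts, and taking the conditional expectation collapses everything to the two effective terms $(L\Delta)^2$ and a constant multiple of $1/(2+N(x))$.

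The remaining quantity is $\mathbb{E}_{\mathcal S}[1/(3+N(x))]$ with $N(x)\sim\mathrm{Binomial}(t,p(x))$ and $p(x)=\mathrm{vol}\big(B(x,\Delta)\cap[0,1]^d\big)$. Here I would invoke the exact identity $\mathbb{E}[1/(1+N)]=\frac{1-(1-p)^{t+1}}{(t+1)p}\le\frac{1}{(t+1)p}$ together with the monotone bound $1/(3+N)\le1/(1+N)$, giving $\mathbb{E}_{\mathcal S}[1/(3+N(x))]\le\frac{1}{(t+1)p(x)}$. To make this uniform in $x$ I would lower-bound $p(x)$ over the whole cube: for $\Delta\le1$ the set $B(x,\Delta)\cap[0,1]^d$ always contains the orthant of the ball pointing into the cube, so $p(x)\ge 2^{-d}V_d\Delta^d$ with $V_d$ the unit-ball volume, and therefore $\mathbb{E}_{\mathcal S}[1/(3+N(x))]=O\big(1/(t\Delta^d)\big)$ uniformly in $x$.

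Assembling the pieces gives, uniformly in $x$, a bound of the form $\sup_x\mathbb{E}_{\mathcal S}\mathbb{E}[(\tilde\func-\func)^2]=O\big((L\Delta)^2+\tfrac{1}{t\Delta^d}\big)$, and choosing the free scale $\Delta\propto t^{-1/(d+2)}$ (with the proportionality constant scaled as $L^{-2/(d+2)}$) balances the two terms and yields the claimed rate $\mathcal O\big(L^{2d/(d+2)}t^{-2/(d+2)}\big)$; the linear inference time is immediate since forming the posterior only requires counting the tests inside $B(x,\Delta)$. I expect the main obstacle to be the careful treatment of $\mathbb{E}_{\mathcal S}[1/(3+N(x))]$ near the cube boundary---establishing the uniform lower bound on $p(x)$ and checking that the small-$N(x)$ events (including $N(x)=0$), tamed by the prior pseudo-counts, do not degrade the $1/(t\Delta^d)$ order---rather than the bias--variance bookkeeping, which is routine once the conditioning is fixed.
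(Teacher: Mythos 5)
Your proposal is correct and follows essentially the same route as the paper's proof: condition on the number $N(x)$ of experiments falling in the $\Delta$-neighborhood, bound the stochastic/variance contributions by $\mathcal{O}(1/(N+1))$ and the smoothing bias by $(L\Delta)^2$ via Lipschitz continuity, control $\mathbb{E}[1/(1+N)]$ for $N\sim\mathrm{Bin}(t,p)$ by $\mathcal{O}(1/(t\Delta^d))$, and balance with $\Delta\propto L^{-2/(d+2)}t^{-1/(d+2)}$. The only (minor, favorable) difference is that you make the explicit bias--variance split of the Beta posterior and handle the boundary of the cube by lower-bounding $p(x)\ge 2^{-d}V_d\Delta^d$, a point the paper's proof glosses over by directly taking $D_x\sim\mathrm{Bin}(t,\Delta^d)$.
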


\begin{remark}
This theorem provides an upper bound for the $L_2$ norm over any point of the space, and takes into account where the experiments are performed in the feature space. If all experiments are performed at the same point, then we recover the familiar square-root rate at that point \cite{ghosal1997review}, but we would not converge at points that are far away.
\end{remark}

\begin{remark}
The constraint on the input space being $[0,1]^d$ can easily be extended to any compact space $\mathcal{X} \subset \mathbb{R}^d$. This would simply modify the convergence rate by a factor equal to the volume of $\mathcal{X}$.
\end{remark}

 \begin{remark}
The dependence of the convergence rate on the feature space dimension is due to the curse of dimensionality, and the fact that we provide convergence uniformly over the whole space. However, this is not due to the particular algorithm we use, and we empirically show that LGP suffers the same dependence on the space dimension (see Section~\ref{sec.experiments}). Similar issues are also prevalent in GP optimization despite which great applications success has been obtained \cite{shahriari2016taking}. 
\end{remark}

In appendix~\ref{sec:classification}, we show how this algorithm naturally applies to binary classification. Restricted to classification, our algorithm becomes similar to the fixed-radius nearest neighbour algorithm ~\cite{chen2018explaining}, but the current framework allows for error quantification and precise prior injection.



\section{Inference for the dynamic setting}\label{sec:inference.dynamic}

We  analyze the dynamic setting where Bernoulli tests are influenced by contextual features as  in \eqref{eq.exp-dynamic}. 



\subsection{Uncorrelated case: a Bayesian approach}
As previously, we start by analyzing the uncorrelated case, i.e., how to update the distribution of $\tilde{\func}(x)$ conditioned on the outputs of experiments $\mathcal{S} = \{(s_i, x_i, A_i, B_i)\}_{i=1,...,t}$ all performed at $x$. 

Since experiments are not samples from Bernoulli variables with parameter $\func(x)$, Bayesian update is not straightforward but can be achieved using sums of Beta distributions, as shown in Theorem~\ref{bayesian-update-single-step}.

\begin{theorem}
\label{bayesian-update-single-step}
Suppose $\tilde{\func}(x) \sim \mathcal{B}(\alpha, \beta)$ and we observe the result of a sample $s\sim \text{Bernoulli}(A\func(x) + B)$. Then the Bayesian posterior for $\tilde{\func}(x)$  conditioned on this observation is given by
\begin{equation}
\tilde{\func}(x|s) \sim C_0 \mathcal{B}(\alpha+1, \beta) + C_1 \mathcal{B}(\alpha, \beta+1), 
\label{pointwiseBayesianUpdate}
\end{equation}
where  in the case of success ($s=1$), we have
\begin{equation*}
C_0 = \frac{B\beta}{B\beta + (A+B)\alpha}, \ \ C_1 = \frac{(A+B)\alpha}{B\beta + (A+B)\alpha}
\end{equation*} 
and in the case of failure ($s=0$), we have
\begin{align*}
C_0 &= \frac{(1-B)\beta}{(1-B)\beta+ (1-A-B)\alpha}, \\
C_1 &= \frac{(1-A-B)\alpha}{(1-B)\beta+ (1-A-B)\alpha}.
\end{align*}
In \eqref{pointwiseBayesianUpdate}, we mean that the density function of the posterior random variable $\tilde{\func}(x|s)$ is the weighted sum of the two density functions given on the right-hand side.
\end{theorem}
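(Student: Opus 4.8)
The plan is to compute the posterior directly from Bayes' rule, exploiting the fact that although the affine likelihood $A\func(x)+B$ breaks Beta--Bernoulli conjugacy, it is \emph{linear} in $\func(x)$ and therefore maps the single Beta prior to a two-term mixture. Write $p:=\func(x)$ and let the prior density be $f(p)\propto p^{\alpha-1}(1-p)^{\beta-1}$ on $[0,1]$. The likelihoods are $\Pr(s=1\mid p)=Ap+B$ and $\Pr(s=0\mid p)=1-(Ap+B)=(1-B)-Ap$; the assumed constraints $0\le B\le 1$ and $0\le A+B\le 1$ guarantee both quantities lie in $[0,1]$ for every $p\in[0,1]$.

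First I would treat the success case. By Bayes' rule the unnormalized posterior is $(Ap+B)\,p^{\alpha-1}(1-p)^{\beta-1}$. The term $Ap\cdot p^{\alpha-1}(1-p)^{\beta-1}=A\,p^{\alpha}(1-p)^{\beta-1}$ is already an (unnormalized) $\mathcal{B}(\alpha+1,\beta)$ kernel, but the term $B\,p^{\alpha-1}(1-p)^{\beta-1}$ is a $\mathcal{B}(\alpha,\beta)$ kernel and must be re-expressed to match the claimed form. The key identity is the partition of unity $1=p+(1-p)$, which gives
\begin{equation*}
p^{\alpha-1}(1-p)^{\beta-1} = p^{\alpha}(1-p)^{\beta-1} + p^{\alpha-1}(1-p)^{\beta}.
\end{equation*}
Substituting, the unnormalized posterior becomes $(A+B)\,p^{\alpha}(1-p)^{\beta-1} + B\,p^{\alpha-1}(1-p)^{\beta}$, i.e. a nonnegative combination of the $\mathcal{B}(\alpha+1,\beta)$ and $\mathcal{B}(\alpha,\beta+1)$ kernels.

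Next I would convert these unnormalized kernels into genuine densities and read off the mixture weights. Using the Beta-function identities $\mathrm{B}(\alpha+1,\beta)=\tfrac{\alpha}{\alpha+\beta}\mathrm{B}(\alpha,\beta)$ and $\mathrm{B}(\alpha,\beta+1)=\tfrac{\beta}{\alpha+\beta}\mathrm{B}(\alpha,\beta)$, the mass attached to each normalized Beta is its leading coefficient times the corresponding Beta-function factor, so after cancelling the common $\mathrm{B}(\alpha,\beta)/(\alpha+\beta)$ the weights on $\mathcal{B}(\alpha+1,\beta)$ and $\mathcal{B}(\alpha,\beta+1)$ are proportional to $(A+B)\alpha$ and $B\beta$ respectively. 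Normalizing these to sum to one yields the pair of constants reported for $s=1$. The failure case is identical in spirit: starting from $[(1-B)-Ap]\,p^{\alpha-1}(1-p)^{\beta-1}$ and applying the same identity produces $(1-A-B)\,p^{\alpha}(1-p)^{\beta-1}+(1-B)\,p^{\alpha-1}(1-p)^{\beta}$, whose coefficients are nonnegative precisely because $A+B\le 1$ and $B\le 1$; normalizing reproduces the $s=0$ constants.

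The derivation is essentially bookkeeping once the partition-of-unity identity is in hand, so the only genuine subtlety---the ``obstacle''---is recognizing that one cannot leave the leftover $\mathcal{B}(\alpha,\beta)$ term untouched: the representation claimed in the theorem is specifically a mixture of the two \emph{shifted} Betas, and obtaining it requires splitting that term and then carefully carrying the Beta-function normalizers through. I would close with a sanity check of the pure-Bernoulli limit $A=1,\ B=0$, which must collapse the mixture to a single $\mathcal{B}(\alpha+1,\beta)$ on a success (and to $\mathcal{B}(\alpha,\beta+1)$ on a failure), confirming both the decomposition and the correct pairing of each weight with its component.
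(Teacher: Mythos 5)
Your derivation is correct and follows essentially the same route as the paper's own proof, which (stated for the more general mixture prior $\sum_i C_i^n\mathcal{B}(\alpha+i,\beta+n-i)$ and specialized to $n=0$) uses the identical decomposition $A\theta+B=(A+B)\theta+B(1-\theta)$ together with the Beta-function ratios $\mathrm{B}(\alpha+1,\beta)/\mathrm{B}(\alpha,\beta)=\alpha/(\alpha+\beta)$ and $\mathrm{B}(\alpha,\beta+1)/\mathrm{B}(\alpha,\beta)=\beta/(\alpha+\beta)$ to read off the mixture weights. The one thing worth flagging is that your (correct) computation attaches the weight proportional to $(A+B)\alpha$ to $\mathcal{B}(\alpha+1,\beta)$ and the weight proportional to $B\beta$ to $\mathcal{B}(\alpha,\beta+1)$ --- consistent with your $A=1,\,B=0$ sanity check and with the paper's general recursion in the appendix --- whereas the theorem as literally printed pairs $C_0=B\beta/(B\beta+(A+B)\alpha)$ with $\mathcal{B}(\alpha+1,\beta)$, so the discrepancy is a transposition in the paper's statement rather than a gap in your argument.
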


Then, by using this result recursively on a set of experiments $\mathcal{S} = \{(s_i,x_i,A_i,B_i)\}_{i=1,...,t}$, we can obtain a general update rule.
\begin{corollary}
\label{bayesian-update-several-steps}
Suppose $\tilde{\func}(x) \sim \mathcal{B}(\alpha, \beta)$ and we observe the outputs of experiments $\mathcal{S} = \{(s_i, x,A_i,B_i)\}_{i=1,...,t}$ where $s_i$'s are sampled from Bernoulli random variables, each with parameter $A_i\func(x) + B_i$. Then the Bayesian posterior $\tilde{\func}(x|\mathcal{S})$   is given by
\begin{equation}
\tilde{\func}(x|\mathcal{S}) \sim \sum_{i=0}^{t} C_i^t \mathcal{B}(\alpha+i, \beta + t-i)
\end{equation}
where $C_i^t$'s can be computed via an iterative procedure starting from $C_0^0 = 1$ and $\forall n=0,...,t$:
\begin{equation*}
C_i^{n+1} =  \frac{1}{E_s^n}(B_iC_i^n(\beta+n-i) + (A_i+B_i)C_{i-1}^n(\alpha + i - 1))
\end{equation*}
if $s_n = 1$; and
\begin{align*}
C_i^{n+1} =  \frac{1}{E_f^n}&((1-B_i)C_i^n(\beta+n-i) \\
&+ (1-A_i-B_i)C_{i-1}^n(\alpha + i - 1))
\end{align*}
if $s_n=0$. $E_s^n$ and $E_f^n$ are normalization factors that ensure $\sum_{i=0}^{n} C_i^{n} = 1 \forall n$. For simplicity of notation, we use $C_{-1}^n = C_{n+1}^n = 0$ $\forall n$.
\end{corollary}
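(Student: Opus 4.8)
The plan is to prove the claim by induction on the number $n$ of experiments that have been incorporated, using Theorem~\ref{bayesian-update-single-step} as the single-step engine together with the elementary fact that Bayesian conditioning acts linearly on a finite mixture. The inductive hypothesis is that after processing the first $n$ experiments the posterior has the stated form $\tilde{\func}(x|\mathcal{S}) \sim \sum_{i=0}^{n} C_i^n \mathcal{B}(\alpha+i, \beta+n-i)$ with $\sum_{i=0}^n C_i^n = 1$. The base case $n=0$ is immediate: with no data the posterior equals the prior $\mathcal{B}(\alpha,\beta)$, which matches $C_0^0 = 1$.

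For the inductive step I would first record the mixture-conditioning identity: if a belief is the mixture $\sum_j w_j g_j$ of densities $g_j$ and we observe $s$ with likelihood $\ell(s\mid p)$, then the posterior density is $\frac{\sum_j w_j \ell_j\, g_j(\cdot\mid s)}{\sum_j w_j \ell_j}$, where $g_j(\cdot\mid s)$ is the single-component posterior and $\ell_j = \int \ell(s\mid p)\, g_j(p)\,dp$ is the evidence (marginal likelihood) of component $j$. Applying this with $g_j = \mathcal{B}(\alpha+j, \beta+n-j)$ and the freshly incorporated experiment, whose likelihood is $\ell(1\mid p) = A p + B$ and $\ell(0\mid p) = 1 - Ap - B$, Theorem~\ref{bayesian-update-single-step} gives each single-component posterior $g_j(\cdot\mid s)$ explicitly as a two-term Beta mixture supported on $\mathcal{B}(\alpha+j+1, \beta+n-j)$ and $\mathcal{B}(\alpha+j, \beta+n-j+1)$.

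The key bookkeeping step is to re-index these atoms against level $n+1$: the atom $\mathcal{B}(\alpha+j+1, \beta+n-j)$ is the $(j+1)$-th component of level $n+1$, while $\mathcal{B}(\alpha+j, \beta+n-j+1)$ is the $j$-th. Hence each target component $\mathcal{B}(\alpha+i, \beta+(n+1)-i)$ collects exactly two contributions, one from the parent $j=i-1$ (through its $\alpha$-increment) and one from the parent $j=i$ (through its $\beta$-increment); the boundary conventions $C_{-1}^n = C_{n+1}^n = 0$ encode that the extreme components $i=0$ and $i=n+1$ have a single parent. Summing the two contributions, each weighted by $w_j = C_j^n$ times its evidence $\ell_j$ times the corresponding within-component coefficient from Theorem~\ref{bayesian-update-single-step}, yields the unnormalized weights, which reproduce the numerators $B\,C_i^n(\beta+n-i) + (A+B)\,C_{i-1}^n(\alpha+i-1)$ in the success case (and the analogous numerators with $1-B$, $1-A-B$ in the failure case) upon identifying $A,B$ with the step's context $A_i,B_i$.

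The step I expect to be the main obstacle, and the one worth verifying carefully, is the normalization: the evidences $\ell_j$ and the component-wise normalizers from Theorem~\ref{bayesian-update-single-step} must combine so that a single global constant suffices. The clean fact that makes this work is that every level-$n$ component shares the same total pseudo-count $(\alpha+j)+(\beta+n-j) = \alpha+\beta+n$, so the factor $\alpha+\beta+n$ appearing in each evidence is common to all parents and can be pulled out, while the numerator of each evidence coincides with the denominator of the matching within-component normalizer and cancels it. After these cancellations the only remaining normalization is exactly the single constant $E_s^n$ (resp.\ $E_f^n$) chosen to enforce $\sum_{i=0}^{n+1} C_i^{n+1} = 1$, which closes the induction.
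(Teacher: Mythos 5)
Your proposal is correct and matches the paper's argument in substance: the paper proves a generalized single-observation update (mixture-of-Betas prior conditioned on one contextual Bernoulli outcome, yielding the stated recursion) by directly expanding the likelihood $A\theta+B = B(1-\theta)+(A+B)\theta$ against the mixture density and re-indexing, and then obtains the corollary by applying that update recursively — which is exactly your induction. Your component-wise invocation of Theorem~\ref{bayesian-update-single-step} together with the mixture-conditioning identity is the same computation packaged slightly differently, and your observation that the common total pseudo-count $\alpha+\beta+n$ lets all evidences and within-component normalizers collapse into a single global constant $E_s^n$ (resp.\ $E_f^n$) is precisely why the paper can defer normalization to the end.
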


This gives us a way of updating, in a fully Bayesian manner, the distribution of $\tilde{\func}(x)$ conditioned on observations of experiments performed at $x$. It involves a linear combination of Beta distributions, with coefficients depending on successes and contextual features.


\paragraph{Certainty invariance assumption} For simplicity, we will assume that if an event is certain (i.e., occurs with probability $1$), then context variables cannot lower this probability (i.e., $\forall x_i$ if $\pi(x_i)$, then $A_i \pi(x_i) + B_i = 1$). This is trivially equivalent to the constraint $A_i + B_i = 1$ $\forall i$. If, on the contrary, there is an impossibility invariance, i.e., context variables cannot make an impossible event possible, we can make a change of variable $f \leftrightarrow 1-f$ (i.e., invert the meanings of ``success'' and ``failure'') in order to satisfy the certainty invariance assumption.




In Corollary~\ref{bayesian-update-several-steps}, we see that the time complexity required for computing $C_i^t \ \forall i=0,...,t$ is $\mathcal{O}(t^2)$. However, in the particular case where $A_i = A$, $B_i = B$ $\forall i$ and $A+B = 1$, the update rule of Corollary~\ref{bayesian-update-several-steps} becomes much simpler, i.e., computable in time $\mathcal{O}(t)$:
\begin{corollary}
\label{bayesian-update-several-steps-simplified}
Suppose $\tilde{\func}(x) \sim \mathcal{B}(\alpha, \beta)$ and we observe the outputs of experiments $\mathcal{S} = \{(s_i, x,1-B,B)\}_{i=1,...,t}$ where $s_i \sim \text{Bernoulli}((1-B)\func(x) + B)$. Then the Bayesian posterior $\tilde{\func}(x|\mathcal{S})$  conditioned on these observations is given by
\begin{equation}
\tilde{\func}(x|\mathcal{S}) \sim \sum_{i=0}^{S} C_i^t \mathcal{B}(\alpha+i, \beta + t-i)
\end{equation}
where $S = \sum_{i=1}^t s_i$ is the total number of successes and
\begin{equation}
C_i^t \propto \binom{S}{i}(\alpha-1+i)!(\beta+t-1-i)!B^{S-i}
\end{equation}
$\forall i=0,...,S$. We can compute all $C_i^t$'s in time $\mathcal{O}(t)$ via the relation $C_{i+1}^t = \frac{(S-i)(\alpha+i)}{B(i+1)(\beta+t-1-i)}C_i^t$.
\end{corollary}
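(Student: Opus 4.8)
The plan is to bypass the order-dependent $\mathcal{O}(t^2)$ recursion of Corollary~\ref{bayesian-update-several-steps} and instead compute the exact posterior density directly, exploiting the fact that all experiments now share the same pair $(A_i,B_i)=(1-B,B)$. Writing $p=\func(x)$ for brevity, each success has probability $(1-B)p+B$ and, because $A+B=1$, each failure has probability $1-[(1-B)p+B]=(1-B)(1-p)$. Since the $t$ experiments are independent, the joint likelihood factorizes and depends on the data only through the success count $S=\sum_i s_i$; hence the posterior density is proportional to
\[
p^{\alpha-1}(1-p)^{\beta-1}\,[(1-B)p+B]^{S}\,[(1-B)(1-p)]^{t-S}.
\]
The failure factor splits as $(1-B)^{t-S}(1-p)^{t-S}$, of which the first piece is a $p$-free constant absorbed into the normalization, leaving the density proportional to $p^{\alpha-1}(1-p)^{\beta+t-S-1}[(1-B)p+B]^{S}$.

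The crux is how to expand $[(1-B)p+B]^{S}$. The naive binomial expansion in powers of $p$ produces a mixture of $\mathcal{B}(\alpha+i,\beta+t-S)$, whose second parameter is pinned at $\beta+t-S$ and therefore does \emph{not} match the target family $\mathcal{B}(\alpha+i,\beta+t-i)$. The single nonobvious step, which I expect to be the main obstacle, is to symmetrize the success probability as $(1-B)p+B=p+B(1-p)$ \emph{before} expanding; the binomial theorem then gives $[(1-B)p+B]^{S}=\sum_{i=0}^{S}\binom{S}{i}B^{S-i}p^{i}(1-p)^{S-i}$, which distributes the $(1-p)$ factors across the components. Substituting and collecting exponents, the density becomes proportional to $\sum_{i=0}^{S}\binom{S}{i}B^{S-i}p^{\alpha+i-1}(1-p)^{\beta+t-i-1}$, i.e. a linear combination of the \emph{unnormalized} densities of $\mathcal{B}(\alpha+i,\beta+t-i)$, exactly the claimed family. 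It is easy to produce \emph{some} mixture-of-Betas representation, but only this rewriting yields the specific second parameter $\beta+t-i$, and checking that it agrees with the representation coming from the general recursion is the consistency check that fixes the normalization.

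It then remains to read off the weights on the \emph{normalized} Beta densities. Multiplying the $i$-th coefficient by the normalizing constant $\Gamma(\alpha+i)\Gamma(\beta+t-i)/\Gamma(\alpha+\beta+t)$ of $\mathcal{B}(\alpha+i,\beta+t-i)$ and discarding the factor $\Gamma(\alpha+\beta+t)^{-1}$ common to all $i$, I obtain (for integer $\alpha,\beta$, so the Gamma functions become factorials) $C_i^t\propto\binom{S}{i}(\alpha-1+i)!(\beta+t-1-i)!\,B^{S-i}$, with Gamma functions replacing the factorials in the general real case. Finally, forming the ratio $C_{i+1}^t/C_i^t$ lets the binomial, factorial, and $B$ factors telescope to $\frac{(S-i)(\alpha+i)}{B(i+1)(\beta+t-1-i)}$, which is the stated recursion; since there are only $S+1=\mathcal{O}(t)$ coefficients and each is produced from its predecessor in constant time, all weights are computed in $\mathcal{O}(t)$, completing the proof.
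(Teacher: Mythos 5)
Your proof is correct, but it takes a genuinely different route from the paper. The paper proves the closed form by induction on $t$: it takes the order-dependent recursion of Corollary~\ref{bayesian-update-several-steps}, assumes the claimed expression holds after $n$ observations, and verifies separately for $s_{n+1}=1$ and $s_{n+1}=0$ that one more recursion step reproduces the expression with $n+1$ and $S_{n+1}$ in place of $n$ and $S_n$. You instead compute the batch posterior directly from Bayes' rule, observing that with $A+B=1$ the failure likelihood factors as $(1-B)(1-p)$, so the $p$-dependent part of the likelihood is $[(1-B)p+B]^{S}(1-p)^{t-S}$, and then making the key rewriting $(1-B)p+B=p+B(1-p)$ before applying the binomial theorem — which is exactly what forces the second Beta parameter to be $\beta+t-i$ rather than the $\beta+t-S$ you would get from the naive expansion in powers of $p$. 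Reading off the mixture weights against the normalized densities and cancelling the common $\Gamma(\alpha+\beta+t)$ then gives $C_i^t\propto\binom{S}{i}\Gamma(\alpha+i)\Gamma(\beta+t-i)B^{S-i}$, and your ratio computation for $C_{i+1}^t/C_i^t$ checks out. Your derivation is self-contained and arguably more illuminating, since it explains structurally where the family $\mathcal{B}(\alpha+i,\beta+t-i)$ comes from; the paper's induction buys something different, namely an explicit verification that the $\mathcal{O}(t^2)$ sequential procedure of Corollary~\ref{bayesian-update-several-steps} genuinely collapses to this closed form in the constant-coefficient case (your argument instead relies implicitly on the fact that sequential Bayesian updating agrees with the batch posterior, which holds here but is worth a sentence). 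The only minor caveat is the one you already flag: the factorial expressions presuppose integer $\alpha,\beta$, with Gamma functions in general — the paper's statement has the same restriction.
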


\begin{algorithm}[t!]
   \caption{Inference engine for the simplified dynamic setting: Constant $A,B$}
   \label{algo-dynamic-setting}
\begin{algorithmic}
   \STATE {\bfseries Input:} experiments descriptions $\mathcal{S} = \{(x_i, s_i,A,B)\}_{i=1,..,t}$, point of interest $x \in \mathcal{X}$, prior knowledge $\tilde{\func}(x) \sim \mathcal{B}(\alpha(x), \beta(x))$
   \STATE {\bfseries Output:} Posterior distribution $\tilde{\func}(x|\mathcal{S})$
   \STATE 1. Set $\Delta \propto \frac{1}{t^{\frac{1}{d+2}}}$
      \STATE 2. Build the set of neighboring experiments $\mathcal{S}_x = \{(x_i, s_i, A, B): \|x-x_i\|\leq \Delta\}$
   \STATE 3. Compute the posterior as in Corollary~\ref{bayesian-update-several-steps} (or ~\ref{bayesian-update-several-steps-simplified} if $A+B=1$) using the results of experiments $\mathcal{S}_x$ as if performed at $x$.
   \end{algorithmic}
\end{algorithm}

\subsection{Leveraging smoothness of $\func$  via experience sharing: Simplified setting}

We now introduce the use of correlations between samples in the update rule, in a similar way as done in the static setting. We first analyze a simplified setting where the contextual parameters are constant among all experiments, i.e. $A_i = A$, $B_i = B$ $\forall i$.

Due to the more complex form of the Bayesian update in the uncorrelated case, it turns out that the previous technique is not straightforward to apply. One way to introduce this idea of experience sharing would be to modify the Bayesian update rule of Theorem ~\ref{bayesian-update-single-step} as:
\begin{equation*}
\tilde{\func}(x|s_i) \sim C_0 \mathcal{B}(\alpha+K(x,x_i), \beta) + C_1 \mathcal{B}(\alpha, \beta+K(x,x_i))
\end{equation*}
where $K$ is the same kernel as defined previously. 

However, if $K(x,x')$ is real valued and that we apply such an update for each experiment, then it turns out that the number of terms required for describing the posterior distribution grows exponentially with the number of observations. In order to ensure the tractability of the posterior, we can restrict the kernel to values in $\{0,1\}$, e.g. $K(x,x') = \delta_{\|x-x'\|\leq \Delta}$ for some kernel width $\Delta$. This means that, each time we make an observation at $x_i$, all random variables $\tilde{\func}(x)$ with $\|x-x_i\| \leq \Delta$ are updated as if the same experiments had been performed at $x$ (Algorithm~\ref{algo-dynamic-setting}).


We provide guarantees for convergence of the posterior distribution generated by Algorithm~\ref{algo-dynamic-setting} under the certainty invariance assumption (Theorem~\ref{Dynamic-convergence-theorem}). This constraint on $A$ and $B$ is equivalent to saying that the contextual parameters necessarily increases the success probability. 

\begin{theorem}
Let $\func:[0,1]^d \rightarrow ]0,1]$ be $L$-Lipschitz continuous. Suppose we observe the results of experiments $\mathcal{S} = \{(x_i, s_i,1-B,B)\}_{i=1,...,t}$ where $s_i \sim \text{Bernoulli}((1-B)\func(x) + B)$. Experiment points $\{x_i\}_{i=1,...,t}$ are assumed to be i.i.d.\ uniformly distributed over the space. Then, starting with a uniform prior $\alpha(x) = \beta(x) = 1 \ \forall x\in [0,1]^d$, the posterior $\tilde{\func}(x|\mathcal{S})$ obtained from Algorithm~\ref{algo-dynamic-setting} uniformly converges in $L_2$-norm to $\func(x)$, i.e.,
\begin{align*}
\sup_{x \in [0,1]^d} \mathbb{E}_{\mathcal{S}} &\left(\mathbb{E}\left((\tilde{\func}(x | \mathcal{S}) - \func(x))^2\right)\right) \\
&\ \ \ \ \ \ = \mathcal{O}\left(L^{\frac{2d}{d+2}}((1-B)t)^{-\frac{2}{d+2}}\right).
\end{align*}
Moreover, Algorithm~\ref{algo-dynamic-setting} computes the point-wise posterior in time $\mathcal{O}(t)$.
\label{Dynamic-convergence-theorem}
\end{theorem}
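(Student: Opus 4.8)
The plan is to follow the same bias--variance route as in Theorem~\ref{Static-convergence-theorem}, the only genuinely new ingredient being that the point-wise posterior is now the mixture of Betas from Corollary~\ref{bayesian-update-several-steps-simplified} rather than a single Beta. I would first condition on the number of neighbours $n=\abs{\mathcal{S}_x}$ and on the neighbour locations $\{x_i:\norm{x-x_i}\le\Delta\}$; given these, the outcomes $s_i$ are independent $\text{Bernoulli}((1-B)\func(x_i)+B)$, and the certainty-invariance constraint $A+B=1$ makes every failure occur through the $\func$-channel with probability $(1-B)(1-\func)$. Writing $\mu(\mathcal{S})$ for the posterior mean and $V(\mathcal{S})$ for the posterior variance, the conditional $L_2$ error splits as $\mathbb{E}[(\tilde{\func}(x|\mathcal{S})-\func(x))^2]=V(\mathcal{S})+(\mu(\mathcal{S})-\func(x))^2$; I would bound the expectation of each piece over the $s_i$, then average over $n$, and finally optimise $\Delta$.

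For the posterior mean and variance I would exploit the form of Corollary~\ref{bayesian-update-several-steps-simplified}. With the uniform prior each component $\mathcal{B}(1+i,1+n-i)$ has mean $(1+i)/(2+n)$, so $\mu(\mathcal{S})=(1+\bar\imath)/(2+n)$ where $\bar\imath=\sum_i i\,C_i^t$ is the posterior-expected count of \emph{genuine} successes (those attributed to the $\func$-channel rather than to $B$). Likewise the law of total variance gives $V(\mathcal{S})=\mathcal{O}(1/n)+(2+n)^{-2}\mathrm{Var}_{\mathrm{post}}(i)$, so everything reduces to controlling $\bar\imath$ and $\mathrm{Var}_{\mathrm{post}}(i)$. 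I would analyse the unnormalised posterior $g(\func)\propto (1-\func)^{\,n-S}[(1-B)\func+B]^{S}$ by a Laplace/Fisher-information argument: its log-curvature at the mode is of order $n\cdot I(\func)$ with $I(\func)=(1-B)/\{[(1-B)\func+B](1-\func)\}$, which pins the posterior standard deviation of $\func$ at order $((1-B)n)^{-1/2}$ and hence $\mathrm{Var}_{\mathrm{post}}(i)=\mathcal{O}(n/(1-B))$, giving $\mathbb{E}[V(\mathcal{S})]=\mathcal{O}(1/((1-B)n))$. This is exactly where the effective sample size $(1-B)t$ is born, and it is the step I expect to be the main obstacle, since unlike the single-Beta static posterior the moments here are ratios of integrals with no closed form.

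For the squared bias I would show that the Bayesian posterior mean tracks the debiasing (method-of-moments) estimator $\hat{\func}=(S/n-B)/(1-B)$ up to an $\mathcal{O}(1/n)$ prior-shrinkage term, so that $\mathbb{E}_{s}[\mu(\mathcal{S})]=\func(x)+\mathcal{O}(L\Delta)+\mathcal{O}(1/n)$: the Lipschitz assumption bounds $\abs{\func(x_i)-\func(x)}\le L\Delta$ over the $\Delta$-ball, and the factor $1-B$ cancels between the numerator $\mathbb{E}[S/n]=(1-B)\overline{\func}+B$ and the denominator. The restriction $\func:[0,1]^d\to\,]0,1]$ keeps the inversion $\hat{\func}$ and the Laplace expansion non-degenerate; the fluctuation of $\hat{\func}$ contributes $\mathrm{Var}(\hat{\func})=\mathcal{O}(1/((1-B)n))$, matching the posterior-variance term.

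Finally I would remove the conditioning on $n\sim\text{Binomial}(t,p_x)$ with $p_x=\mathrm{Vol}(B(x,\Delta)\cap[0,1]^d)=\Theta(\Delta^d)$ (the corner case costs only a constant $2^{-d}$). Using the standard bound $\mathbb{E}[n^{-1}\mathbbm{1}_{n\ge1}]=\mathcal{O}(1/\mathbb{E}[n])=\mathcal{O}(1/(t\Delta^d))$ and noting that $\Pr(n=0)=(1-p_x)^t\le e^{-c\,t\Delta^d}$ is negligible, the averaged error is $\mathcal{O}\big(L^2\Delta^2+\tfrac{1}{(1-B)t\Delta^d}+e^{-c\,t\Delta^d}\big)$ uniformly in $x$. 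Balancing the first two terms at $\Delta=\Theta\big((L^2(1-B)t)^{-1/(d+2)}\big)$, which is the $\Delta\propto t^{-1/(d+2)}$ of Algorithm~\ref{algo-dynamic-setting} with an $L,B$-dependent constant, yields the claimed $\mathcal{O}\big(L^{2d/(d+2)}((1-B)t)^{-2/(d+2)}\big)$, while the $\mathcal{O}(t)$ per-query runtime follows directly from the recursion in Corollary~\ref{bayesian-update-several-steps-simplified}.
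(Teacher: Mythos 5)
Your proposal is correct in its overall strategy and arrives at the same per-neighbourhood balance $L^2\Delta^2 + \mathcal{O}\left(1/((1-B)n)\right)$, but the technical engine you use for the conditional moment bounds is genuinely different from the paper's. The paper works directly with the explicit mixture weights $C_i^{n}\propto\binom{n-i}{S-i}B^{S-i}$ of Corollary~\ref{bayesian-update-several-steps-simplified}: it evaluates $\sum_i C_i^{n}\tfrac{i+1}{n+2}$ and $\sum_i C_i^{n}\tfrac{(i+1)(i+2)}{(n+2)(n+3)}$ in closed form through a chain of binomial identities, obtaining $\mathbb{E}_{\mathbf{s}}\left[\mathbb{E}(\tilde{\func}(x|\mathcal{S}))\mid D_x=n\right]=\tfrac{\sum_i\func(x_i)}{n+2}+\tfrac{1-2B}{(1-B)(n+2)}$ plus a remainder of the form $\sum_s\Pr(S_x=s)\Pr(Z=s)/\Pr(Z\le s)$ with $Z\sim\text{Bin}(n+1,B)$, which is killed by a Hoeffding separation argument exploiting $\bar{\func}>0$; the second moment is handled by the same machinery. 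You instead identify the posterior density as $\propto(1-\func)^{n-S}\left[(1-B)\func+B\right]^{S}$ and invoke a Laplace/Fisher-information argument together with concentration of the plug-in estimator $(S/n-B)/(1-B)$. Both routes are viable; yours is more conceptual and would extend to the polynomial links mentioned in the paper's discussion, while the paper's is elementary but computation-heavy. The step you rightly flag as the main obstacle, namely $\text{Var}_{\text{post}}(\func)=\mathcal{O}\left(1/((1-B)n)\right)$ uniformly over realizations of $S$, does need care: the posterior is log-concave, but the crude curvature lower bound $-\ell''\ge(n-S)+S(1-B)^2$ combined with Brascamp--Lieb only yields $1/((1-B)^2n)$ when $S$ is close to $n$, and the mode sits on the boundary when $S/n\le B$ or $S=n$. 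This is repairable by a short case analysis (use the trivial bound $V\le 1/4$ when $(1-B)n=\mathcal{O}(1)$, and note that for $S=n$ the posterior width is $\mathcal{O}\left(((1-B)n)^{-1}\right)$, whose square is dominated by $((1-B)n)^{-1}$ once $(1-B)n\ge1$), together with the same Hoeffding step the paper uses to discard the event that $S/n$ falls near or below $B$. With that patch your argument closes; the de-conditioning over $n\sim\text{Bin}(t,\Theta(\Delta^d))$, the choice $\Delta=\Theta\left((L^2(1-B)t)^{-1/(d+2)}\right)$, and the $\mathcal{O}(t)$ runtime claim all match the paper.
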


\begin{remark}
We observe that adding the contextual parameter $B$ does not modify the convergence rate compared to the static case. By using other algorithms such as LGP, parameter $B$ should be added to the feature space, increasing its dimension by $1$, which impacts the convergence rate as we demonstrate in the sequel.
\end{remark}

\begin{algorithm}[t!]
   \caption{Contextual Smooth Beta Process (CSBP)}
   \label{algo-dynamic-setting-general}
\begin{algorithmic}
   \STATE {\bfseries Input:} experiments descriptions $\mathcal{S} = \{(x_i, s_i,A_i,B_i)\}_{i=1,..,t}$, point of interest $x \in \mathcal{X}$, prior knowledge $\tilde{\func}(x) \sim \mathcal{B}(\alpha(x), \beta(x))$
   \STATE {\bfseries Output:} Posterior distribution $\tilde{\func}(x|\mathcal{S})$
    \STATE 1. Set $\Delta \propto \frac{1}{t^{\frac{1}{d+2}}}$
      \STATE 2. Build the set of neighboring experiments \\ $\mathcal{S}_x = \{(x_i, s_i, A_i, B_i): \|x-x_i\|\leq \Delta\}$
   \STATE 3. Compute means $B = \frac{1}{|\mathcal{S}_x|}\sum_{i:  \|x-x_i\|\leq \Delta} B_i$, \\ $A = \frac{1}{|\mathcal{S}_x|}\sum_{i:  \|x-x_i\|\leq \Delta} A_i$
   \STATE 4. Compute the posterior as in Corollary~\ref{bayesian-update-several-steps} (or ~\ref{bayesian-update-several-steps-simplified} if $A+B=1$) using the results of experiments $\mathcal{S}_x$ as if performed at $x$, and with constant parameters $A, B$.
\end{algorithmic}
\end{algorithm}

\begin{figure*}[t!]
\centering
 \begin{subfigure}[t]{0.31\textwidth}
\includegraphics[width=5.5cm]{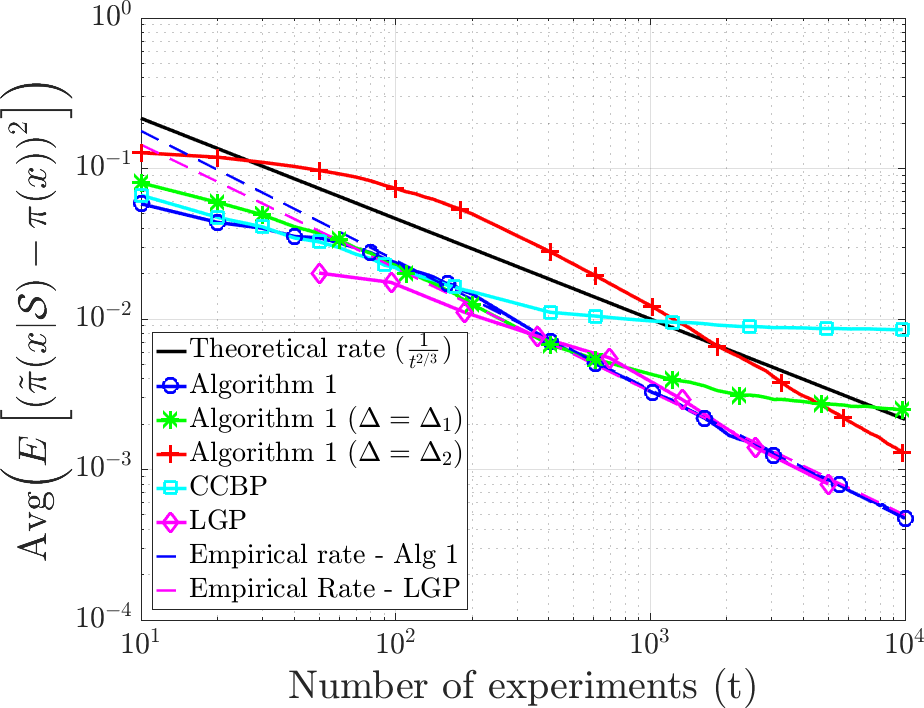}
\label{fig:1D_static_rate}
\end{subfigure}
\hspace{1mm}
\begin{subfigure}[t]{0.31\textwidth}
\includegraphics[width=5.2cm]{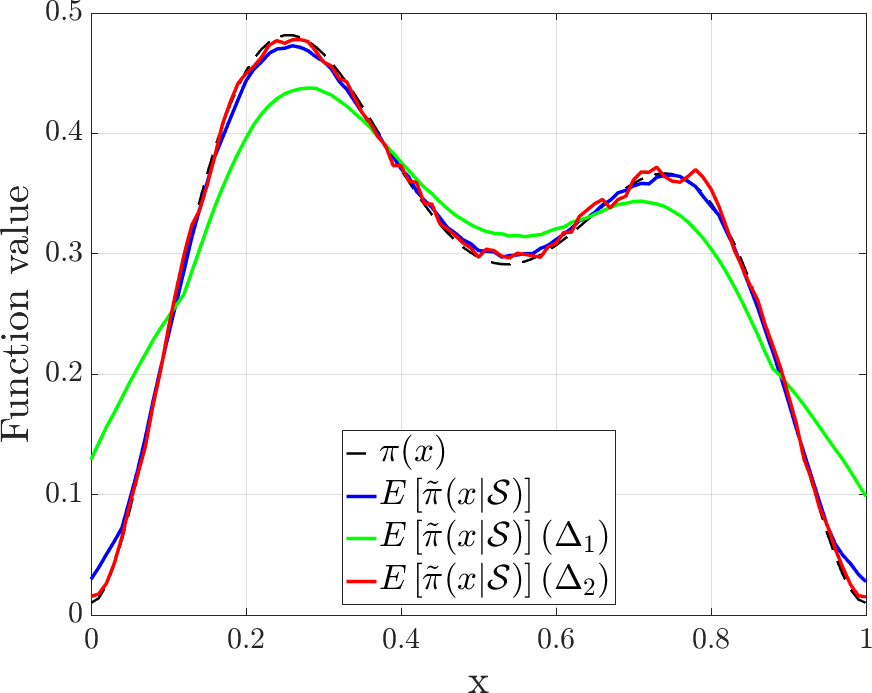}
\end{subfigure}
\begin{subfigure}[t]{0.31\textwidth}
\includegraphics[width=5.5cm]{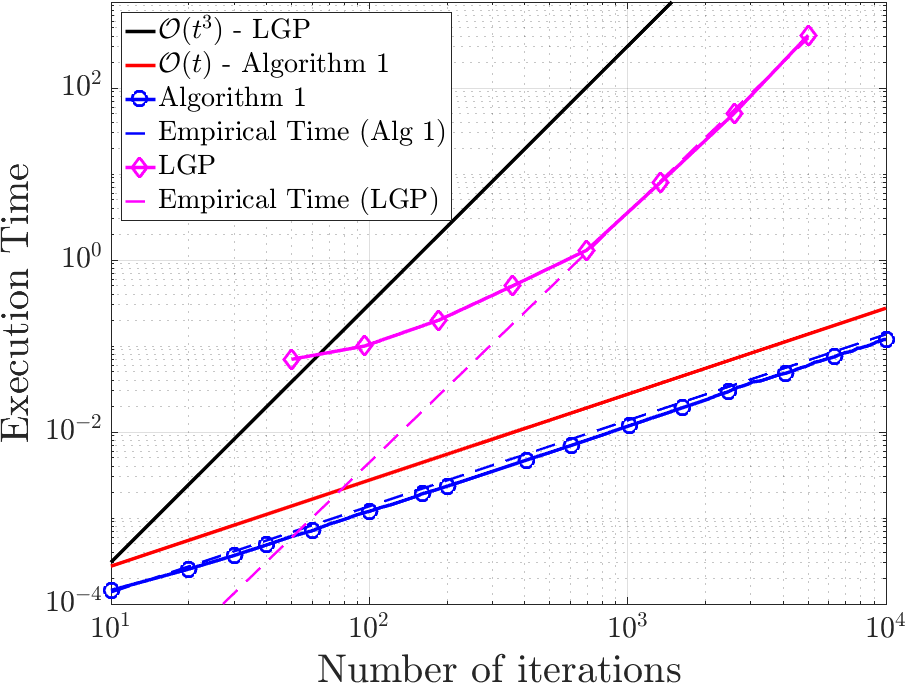}
\label{fig:1D_static_time}
\end{subfigure}
\caption{Left: $L_2$ error for 1D static setting. Middle: Mean posterior estimates $\mathbb{E}[\tilde{\func}(x|\mathcal{S})]$ generated by Algorithm~\ref{algo-static-setting} for different kernel widths. Right: Running time for Algorithm~\ref{algo-static-setting} and LGP}
\label{fig:1D_static}
\end{figure*}

\begin{figure*}[h!]
\centering
\begin{subfigure}[t]{0.45\textwidth}
\includegraphics[width=6cm]{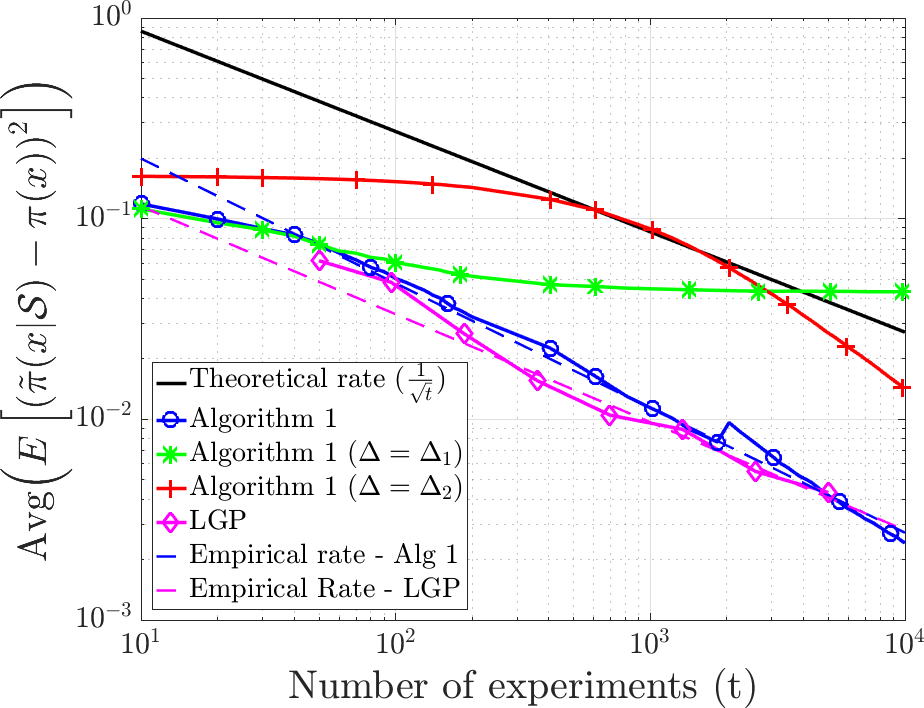}
\end{subfigure}
\begin{subfigure}[t]{0.45\textwidth}
\includegraphics[width=6cm]{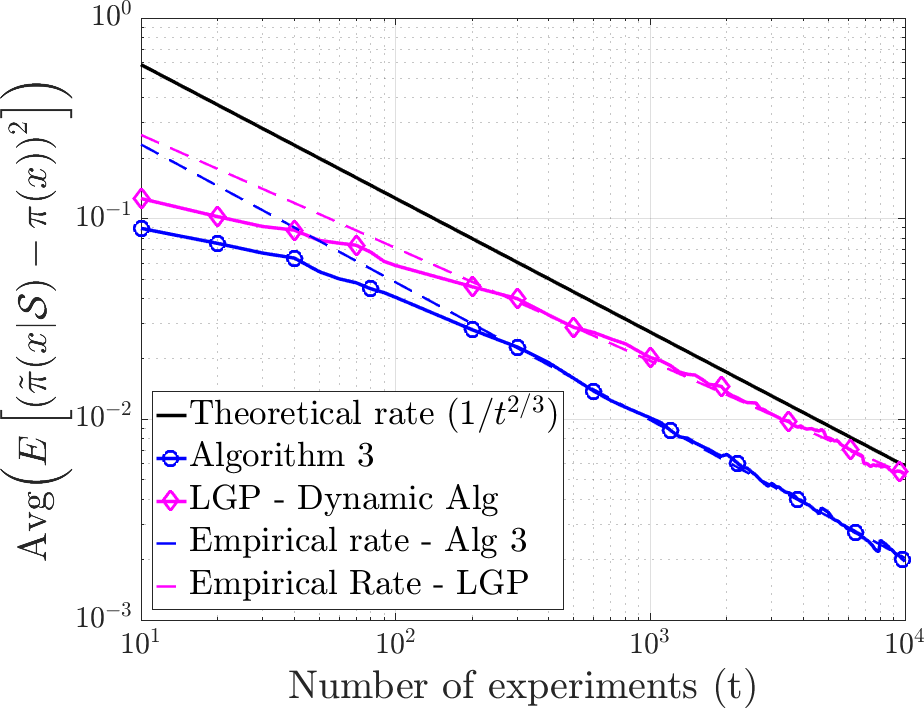}
\end{subfigure}
\caption{$L_2$ error of posterior $\mathbb{E}[\tilde{\func}(x|\mathcal{S})]$ for 2D static and 1D dynamic settings, averaged over all points $x\in \mathcal{X}$ versus number of samples.} \vspace{-4mm}
\label{fig:synthetic}
\end{figure*}

\subsection{Leveraging smoothness of $\func$ via experience sharing: General setting}

We finally analyze the general setting where contextual parameters are noisy and may vary among the experiments. Instead of using each $A_i$, $B_i$ in the update rule of the posterior, we can perform this update as if all experiments were performed with the same coefficients $A$ and $B$, which are the means of the coefficients $A_i$ and $B_i$ respectively. This approximation simplifies the analysis and does not influence the error  rate.

Algorithm \ref{algo-dynamic-setting-general}, called Contextual Smooth Beta Process (CSBP), is general and can be applied to any contextual parameters $A_i, B_i$ with no constraints. We provide guarantees for convergence of the posterior distribution generated by CSBP under the certainty invariance assumption (Theorem~\ref{Dynamic-convergence-theorem-general}).

\begin{theorem}
Let $\func:[0,1]^d \rightarrow ]0,1]$ be $L$-Lipschitz continuous. Suppose we observe the results of experiments $\mathcal{S} = \{(x_i, s_i,1-B_i,B_i)\}_{i=1,...,t}$ where $s_i \sim \text{Bernoulli}((1-(B_i+\epsilon_i))\func(x_i) + B_i+\epsilon_i)$, i.e., contextual features are noisy. We assume $\epsilon_i$'s are independent random variables with zero mean and variance $\sigma^2$. The points $\{x_i\}_{i=1,...,t}$ are assumed to be i.i.d.\ uniformly distributed over the space. Then, starting with a uniform prior $\alpha(x) = \beta(x) = 1 \ \forall x\in [0,1]^d$, the posterior $\tilde{\func}(x|\mathcal{S})$ obtained from Algorithm~\ref{algo-dynamic-setting-general} uniformly converges in $L_2$-norm to $\func(x)$, i.e.,
\begin{equation*}
\begin{split}
\sup_{x\in [0,1]^d}&\mathbb{E}_{\mathcal{S}} \left(\mathbb{E}\left((\tilde{\func}(x|\mathcal{S}) - \func(x))^2\right)\right) \\
&\ \ \ \ \ \ \ \ \ \ \ = \mathcal{O}\left(c(B,\sigma^2)L^{\frac{2d}{d+2}}t^{-\frac{2}{d+2}}\right),
\end{split}
\end{equation*}
where $c(B,\sigma^2)$ is a constant depending on $\{B_i\}_{i=1,...,t}$ and the noise $\sigma^2$. Moreover, CSBP computes the posterior in time $\mathcal{O}(t)$.
\label{Dynamic-convergence-theorem-general}
\end{theorem}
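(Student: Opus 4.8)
The plan is to reduce the analysis to the same bias--variance decomposition that underlies Theorem~\ref{Static-convergence-theorem} and Theorem~\ref{Dynamic-convergence-theorem}, and to isolate the two genuinely new effects: the use of the empirical mean $B=\tfrac1{|\mathcal{S}_x|}\sum_{i\in\mathcal{S}_x}B_i$ in place of the true, heterogeneous, and noisy per-sample parameters $B_i+\epsilon_i$, and the extra randomness injected by the noise. Fix a query point $x$, let $N=|\mathcal{S}_x|$ count the experiments in the ball $\{x_i:\|x-x_i\|\le\Delta\}$, and write $m(\mathcal{S})=\mathbb{E}[\tilde{\func}(x|\mathcal{S})]$ for the posterior mean. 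First I would split the target via the tower rule into three nonnegative pieces: the expected posterior variance $\mathbb{E}_\mathcal{S}[\mathrm{Var}(\tilde{\func}(x|\mathcal{S}))]$, the data-variance of the posterior mean $\mathrm{Var}_\mathcal{S}(m(\mathcal{S}))$, and the squared bias $(\mathbb{E}_\mathcal{S}[m(\mathcal{S})]-\func(x))^2$. All three must be bounded uniformly in $x$. Since $A+B=1$ under the certainty-invariance assumption, step~4 of Algorithm~\ref{algo-dynamic-setting-general} produces the Beta mixture of Corollary~\ref{bayesian-update-several-steps-simplified}; reusing the concentration of the mixing weights $C_i^t$ from the proof of Theorem~\ref{Dynamic-convergence-theorem}, I would show $m(\mathcal{S})$ tracks the \emph{de-meaned} empirical rate $\hat{\func}:=(\hat p-B)/(1-B)$, with $\hat p=\tfrac1N\sum_{i\in\mathcal{S}_x}s_i$, up to a posterior spread of order $1/((1-B)N)$.

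The bias computation is where the choice of the constant $B$ pays off, and I expect it to be the cleanest part. Since $s_i\sim\mathrm{Bernoulli}\big((1-B_i)\func(x_i)+B_i+\epsilon_i(1-\func(x_i))\big)$ with $\mathbb{E}[\epsilon_i]=0$, taking expectations over outcomes and noise gives, conditionally on the point configuration,
\begin{equation*}
\mathbb{E}[\hat p]-B=\tfrac1N\textstyle\sum_{i\in\mathcal{S}_x}(1-B_i)\func(x_i),\qquad 1-B=\tfrac1N\textstyle\sum_{i\in\mathcal{S}_x}(1-B_i),
\end{equation*}
so $\mathbb{E}[\hat{\func}]=\big(\sum_{i\in\mathcal{S}_x}(1-B_i)\func(x_i)\big)/\big(\sum_{i\in\mathcal{S}_x}(1-B_i)\big)$ is a genuine convex combination of the values $\func(x_i)$. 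The mean $B$ in the denominator exactly cancels the average of the $B_i$ in the numerator, so the misspecification introduces no first-order bias; the Lipschitz assumption~\eqref{Lipschitz} then bounds $|\mathbb{E}[\hat{\func}]-\func(x)|\le L\Delta$ uniformly in $x$, independently of the spread of the $B_i$. Hence the squared-bias term is $O(L^2\Delta^2)$, exactly as in the static case.

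The variance term is where $c(B,\sigma^2)$ emerges. Conditioning on $\epsilon_i$ I would decompose $\mathrm{Var}(s_i)=\mathbb{E}_\epsilon[p_i(1-p_i)]+\sigma^2(1-\func(x_i))^2$, then propagate through the de-meaning: dividing the averaged per-sample variance by $(1-B)^2$ and using $p_i(1-p_i)=O(1-B_i)$ gives
\begin{equation*}
\mathrm{Var}(\hat{\func})=O\!\left(\tfrac1N\big(\tfrac{1}{1-B}+\tfrac{\sigma^2}{(1-B)^2}\big)\right),
\end{equation*}
which, together with the posterior-spread term, defines $c(B,\sigma^2)$. The count $N$ is itself random, $N\sim\mathrm{Binomial}(t,c_d\Delta^d)$, so I would invoke a Chernoff bound to restrict to the event $N=\Theta(t\Delta^d)$, on whose complement the trivial bound $(\tilde{\func}-\func)^2\le1$ contributes only an exponentially small amount. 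Substituting $N=\Theta(t\Delta^d)$ yields a variance of order $c(B,\sigma^2)/(t\Delta^d)$; combining with the $O(L^2\Delta^2)$ bias, the algorithm's choice $\Delta\propto t^{-1/(d+2)}$, and balancing the two terms gives the claimed rate $O\big(c(B,\sigma^2)\,L^{2d/(d+2)}t^{-2/(d+2)}\big)$, uniformly over $x$.

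The main obstacle I anticipate is the first step: rigorously showing that the Beta-mixture posterior mean of Corollary~\ref{bayesian-update-several-steps-simplified} truly concentrates on $(\hat p-B)/(1-B)$ \emph{when the generating parameters $B_i+\epsilon_i$ differ from the constant $B$ fed into the update} --- that is, controlling the deconvolution under model misspecification and tracking the first moment $\sum_i iC_i^t$ of the mixing distribution with heterogeneous, noisy weights. This is precisely the point where the proof of Theorem~\ref{Dynamic-convergence-theorem} must be re-examined rather than cited verbatim, since there the $B_i$ are identical and noiseless. Once this characterization is secured, the exact cancellation in the bias and the bookkeeping of the noise-inflated variance proceed as above, and uniformity over $x$ is immediate because every bound is stated independently of $x$.
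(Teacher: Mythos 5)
Your proposal is correct and follows essentially the same route as the paper's proof: the paper likewise reduces to the Theorem~\ref{Dynamic-convergence-theorem} calculation with $B$ replaced by the local average $B_X=\frac{1}{n}\sum_i B_i$, obtains the leading posterior-mean term $\sum_i(1-B_i)\func(x_i)/\bigl((1-B_X)(n+2)\bigr)$ — i.e.\ exactly your convex-combination cancellation giving an $L^2\Delta^2$ bias independent of the spread of the $B_i$ — and a variance term $\mathcal{O}\bigl(\tfrac{1}{(1-B_X)(n+2)}+\tfrac{\sigma^2}{(1-B_X)^2(n+2)}\bigr)$, before averaging over $D_x\sim\mathrm{Bin}(t,\Delta^d)$ and setting $\Delta\propto L^{-2/(d+2)}t^{-1/(d+2)}$. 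The obstacle you flag (concentration of the mixture weights $C_i^t$ under heterogeneous, noisy parameters) is precisely the step the paper re-derives, via the same binomial identities and a Hoeffding bound showing the correction terms decay exponentially in $n$.
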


\section{Numerical experiments}\label{sec.experiments}
We devise a set of experiments to demonstrate the capabilities of our inference engine and validate the theoretical bounds for static and dynamic settings. We start with synthetic experiments in 1D and 2D, and finally reproduce the case study used in \cite{goetschalckx2011continuous} to show the efficiency of our dynamic algorithm.





\subsection{Synthetic examples}\label{sec.experiments-synthetic}

We construct a function $\func:\mathcal{X}\rightarrow [0,1]$, uniformly select points $\{x_i\}_{i=1,...,t}$, and sample $s_i \sim \text{Bernoulli}(\func(x_i)), \ i=1,...,t$. From these data, SBP constructs the posterior distributions $\tilde{\func}(x | \mathcal{S}) \ \forall x\in \mathcal{X}$. This experiment is performed both in 1D setting using a feature space $\mathcal{X} = [0,1]$, and in 2D with $\mathcal{X} = [0,1]^2$. We also apply LGP and CCBP (with fixed square exponential kernel) to this problem for comparison. Explicit forms of the chosen functions are presented in the Appendix.

For the dynamic setting, contextual parameters $\{ B_i \}_{i=1,...,t}$ are sampled independently and uniformly from $[0,1]$, and the tests are then performed by sampling $s_i \sim \text{Bernoulli}((1-B_i)\func(x_i) + B_i), \ i=1,...,t$. The posterior is constructed using CSBP. We also applied LGP to this dynamic setting by including the parameter $B$ as an additional feature. In order to evaluate $\func$, LGP returns the approximated distribution associated with $B=0$.

For the static setting (1D and 2D), Figures~\ref{fig:1D_static} (left) and ~\ref{fig:synthetic} (left) show the $L_2$ errors of the posterior distributions averaged across all $x\in \mathcal{X}$, and over $20$ runs, as functions of the number of samples $t$. We can observe the convergence upper bounds $\mathcal{O}(1 / t^{\frac{2}{3}})$ in 1D, and $\mathcal{O}(1 / \sqrt{t})$ in 2D as predicted by Theorem~\ref{Static-convergence-theorem}. 

We observe that LGP and our method perform similarly, as pointed out in~\cite{goetschalckx2011continuous}. However, running LGP takes significantly more time than our method since its time complexity is $\mathcal{O}(t^3)$ compared to $\mathcal{O}(t)$ for our algorithm, as demonstrated numerically in Figure~\ref{fig:1D_static} (right). We observe that CCBP saturates after some time since the kernel is independent of the number of samples.

Additionally, Figure~\ref{fig:1D_static} (left) demonstrates two sets of error curves for variations of Algorithm \ref{algo-static-setting}. To argue about optimality of kernel width specification, we run SBP with fixed kernel widths $\Delta_1 =  50^{-\frac{1}{d+2}}$ and $\Delta_2 = 500000^{-\frac{1}{d+2}}$. When $\Delta \ll t^{-\frac{1}{d+2}}$, the $L_2$ error initially decays at a slow rate and error remains larger than the optimal setting (green curves). On the contrary if we fix the kernel width $\Delta \gg t^{-\frac{1}{d+2}}$, the error saturates at early iterations (blue curves).

Figure~\ref{fig:1D_static} (middle) shows how the built posterior distribution approximates the true synthetic probability function by plotting the posterior mean over the space $\mathcal{X}$ for the different kernel widths. We observe that using a wide kernel ($\Delta_1$) leads to a posterior which is too smooth, due to experience oversharing. On the  other hand, using a narrow kernel ($\Delta_2$) leads to a highly non-smooth posterior, due to insufficient  sharing.

Figure~\ref{fig:synthetic} (right) similarly shows the $L_2$ error of the posterior distributions for the dynamic setting, also averaged over all $x\in \mathcal{X}$, and over $20$ runs. We again observe the convergence upper bounds $\mathcal{O}(1 / \sqrt{t})$ in 2D as predicted by Theorem~\ref{Dynamic-convergence-theorem-general}. We observe that our algorithm performs much better than LGP since it applies on a lower dimensional space.

\begin{figure*}[t!]
	\centering
		\begin{subfigure}[t]{0.45\textwidth}
			\includegraphics[width=6cm]{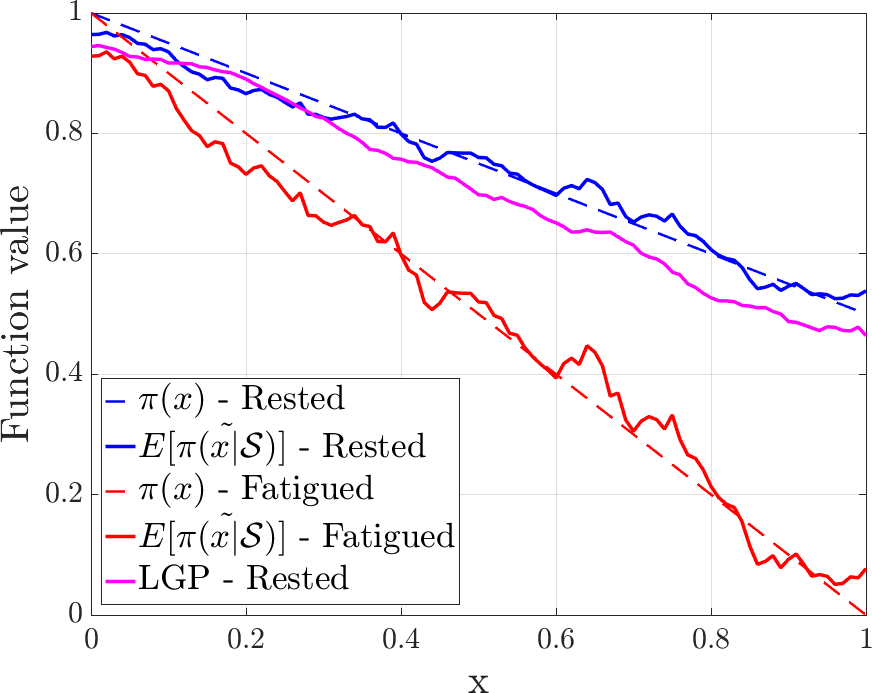}
			\label{fig:belief-multilevel-stroke}
		\end{subfigure}
		\begin{subfigure}[t]{0.45\textwidth}
			\includegraphics[width=6cm]{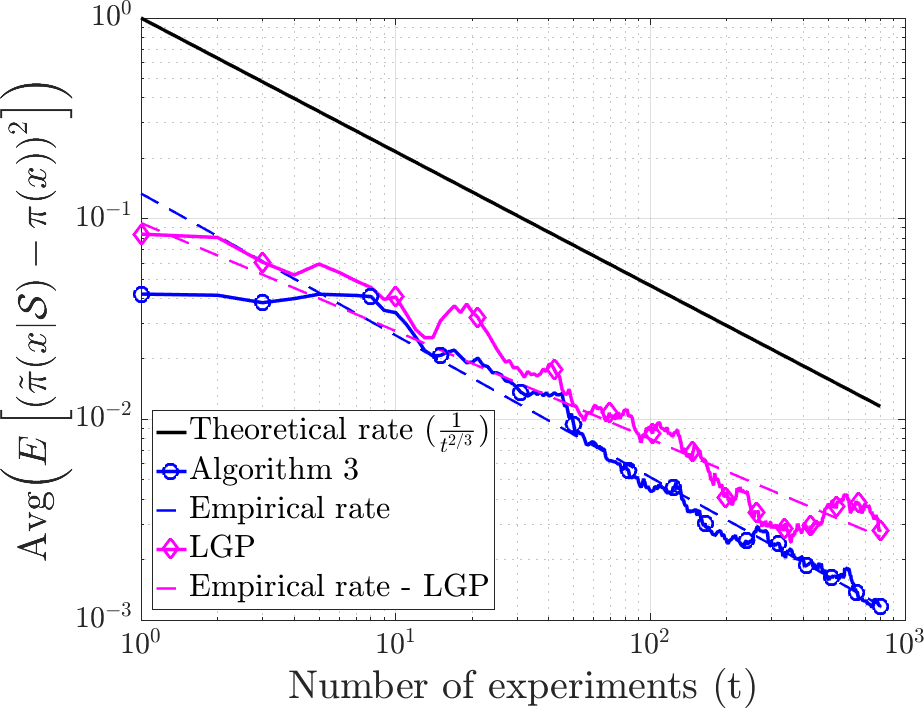}
			\label{fig:convergence-multilevel-stroke}
		\end{subfigure}
		\caption{Left: Mean posteriors $\mathbb{E}[\tilde{\func}(x|\mathcal{S})]$ for target functions representing rested (blue) and highest fatigue (red) states. Right: $L_2$ error for rested state averaged over all points versus sample size $t$.}
		\label{fig:rehab-main}
\end{figure*}

\subsection{Application to biased data: a case study from \citet{goetschalckx2011continuous}}\label{sec.experiments-realworld}
Handling biased data is currently one of the major problems in machine learning. In this section, we investigate how CSBP can treat bias by the mean of contextual features.


In line with \citet{goetschalckx2011continuous}, we conduct a case study with synthetic stroke rehabilitation data. The goal is to determine the probability that a patient succeeds in an exercise based on its difficulty. However, the patient can in some cases be fatigued, which influences the success probability and thus introduces a bias in the experiments.

Let $f(x)$ denote the success probability function for exercise with difficulty $x \in [0,1]$, when the patient is not fatigued. We assume that the patient has a certain level of fatigue $\alpha_f \in [0.5, 1]$, in which case his success probability function becomes $\alpha_f f$.

Note that the impossibility invariance assumption holds in this case since being fatigue cannot make possible a task which was already impossible. As mentioned previously, we can then simply make a change of variable in order to satisfy the certainty invariance assumption, and safely apply the dynamic algorithm.


Alternatively, by treating the level of fatigue as a new dimension in the feature space, LGP can be applied to the rehabilitation case study.

We assume that the difficulty of the exercise influences (in an unknown way) the success probability as $f(x) = 1-x$, $x \in [0,1]$. We construct a synthetic dataset by uniformly sampling exercise difficulties $x$ and fatigue levels $\alpha_f$, and then sampling the success from $f_{\alpha_f}(x)$.

Figure \ref{fig:rehab-main} shows the reconstructed success probability distributions when the patient is either not fatigued (rested state) or in the final fatigued state ($\alpha_f = 0.5$), as well as the $L_2$ error of the posterior for the rested state. Since LGP operates on a higher dimensional space, we observe that the $L_2$ error decays slower and the approximation of the target function for the rested state is worse than CSBP.

\section{Conclusions}\label{sec.conclusion}

In this paper, we build an inference engine for learning smooth probability functions from a set of Bernoulli experiments, which may be influenced by contextual features. We design an efficient and scalable algorithm for computing a posterior converging to the target function with provable rate, and demonstrate its efficiency on synthetic and real-world problems. These characteristics together with the simplicity of SBP make it a competitive tool compared to LGP, which has been shown to be an important tool in many real-world applications. We thus expect practitioners to apply this method to such problems.


\paragraph{Discussion and future work} The current analysis can only model a particular type of contextual influence, which modifies the success probability as $A_i\func(x_i) + B_i$. It turns out that Theorem~\ref{bayesian-update-single-step} can be generalized to any polynomial transformation of the success probability (i.e. $\sum_{j=0}^p a_i^{(j)}\func(x_i)^j$), allowing for a wider class of contextual influences.

Moreover, the theoretical framework we provide seems to be applicable to a large class of problems, such as risk tracking, Bandit setting, active learning, etc. Extending this model to such applications would also be an interesting research direction.




\section{Acknowledgement}

This work was supported by the Swiss National Science Foundation (SNSF) under  grant number 407540\_167319.

\bibliography{refs}

\DeclareRobustCommand{\firstsecond}[2]{#2}
\begin{thebibliography}{25}
\providecommand{\natexlab}[1]{#1}
\providecommand{\url}[1]{\texttt{#1}}
\expandafter\ifx\csname urlstyle\endcsname\relax
  \providecommand{\doi}[1]{doi: #1}\else
  \providecommand{\doi}{doi: \begingroup \urlstyle{rm}\Url}\fi

\bibitem[Agarwal et~al.(2018)Agarwal, Beygelzimer, Dud{\'\i}k, Langford, and
  Wallach]{agarwal2018reductions}
Agarwal, A., Beygelzimer, A., Dud{\'\i}k, M., Langford, J., and Wallach, H.
\newblock A reductions approach to fair classification.
\newblock \emph{arXiv preprint arXiv:1803.02453}, 2018.

\bibitem[Audibert et~al.(2007)Audibert, Tsybakov, et~al.]{audibert2007fast}
Audibert, J.-Y., Tsybakov, A.~B., et~al.
\newblock Fast learning rates for plug-in classifiers.
\newblock \emph{The Annals of statistics}, 35\penalty0 (2):\penalty0 608--633,
  2007.

\bibitem[Casella \& Berger(2002)Casella and Berger]{casella2002statistical}
Casella, G. and Berger, R.~L.
\newblock \emph{Statistical inference}, volume~2.
\newblock Duxbury Pacific Grove, CA, 2002.

\bibitem[Chen et~al.(2018)Chen, Shah, et~al.]{chen2018explaining}
Chen, G.~H., Shah, D., et~al.
\newblock Explaining the success of nearest neighbor methods in prediction.
\newblock \emph{Foundations and Trends{\textregistered} in Machine Learning},
  10\penalty0 (5-6):\penalty0 337--588, 2018.

\bibitem[DerSimonian \& Laird(1986)DerSimonian and Laird]{dersimonian1986meta}
DerSimonian, R. and Laird, N.
\newblock Meta-analysis in clinical trials.
\newblock \emph{Controlled clinical trials}, 7\penalty0 (3):\penalty0 177--188,
  1986.

\bibitem[Ghosal(1997)]{ghosal1997review}
Ghosal, S.
\newblock A review of consistency and convergence of posterior distribution.
\newblock In \emph{Varanashi Symposium in Bayesian Inference, Banaras Hindu
  University}, 1997.

\bibitem[Goetschalckx et~al.(2011)Goetschalckx, Poupart, and
  Hoey]{goetschalckx2011continuous}
Goetschalckx, R., Poupart, P., and Hoey, J.
\newblock Continuous correlated beta processes.
\newblock In \emph{IJCAI}, 2011.

\bibitem[Gompert(2016)]{gompert2016continuous}
Gompert, Z.
\newblock A continuous correlated beta process model for genetic ancestry in
  admixed populations.
\newblock \emph{PloS one}, 11\penalty0 (3):\penalty0 e0151047, 2016.

\bibitem[Gupta \& Wong(1985)Gupta and Wong]{gupta1985three}
Gupta, A.~K. and Wong, C.
\newblock On three and five parameter bivariate beta distributions.
\newblock \emph{Metrika}, 32\penalty0 (1):\penalty0 85--91, 1985.

\bibitem[Hjort(1990)]{hjort1990nonparametric}
Hjort, N.~L.
\newblock Nonparametric bayes estimators based on beta processes in models for
  life history data.
\newblock \emph{The Annals of Statistics}, pp.\  1259--1294, 1990.

\bibitem[Hoey et~al.(2012)Hoey, Yang, Grzes, Navarro, and
  Favela]{hoey2012modeling}
Hoey, J., Yang, X., Grzes, M., Navarro, R., and Favela, J.
\newblock Modeling and learning for lacasa, the location and context-aware
  safety assistant.
\newblock In \emph{NIPS 2012 Workshop on Machine Learning Approaches to Mobile
  Context Lake Tahoe, NV}, 2012.

\bibitem[Johnson \& Wichern(2002)Johnson and Wichern]{johnson2002multivariate}
Johnson, R.~A. and Wichern, D.
\newblock \emph{Multivariate analysis}.
\newblock Wiley Online Library, 2002.

\bibitem[Knapik et~al.(2011)Knapik, van~der Vaart, van Zanten,
  et~al.]{knapik2011bayesian}
Knapik, B.~T., van~der Vaart, A.~W., van Zanten, J.~H., et~al.
\newblock Bayesian inverse problems with gaussian priors.
\newblock \emph{The Annals of Statistics}, 39\penalty0 (5):\penalty0
  2626--2657, 2011.

\bibitem[Krause \& Ong(2011)Krause and Ong]{krause2011contextual}
Krause, A. and Ong, C.~S.
\newblock Contextual gaussian process bandit optimization.
\newblock In \emph{NIPS}, pp.\  2447--2455, 2011.

\bibitem[Krichevsky \& Trofimov(1981)Krichevsky and
  Trofimov]{krichevsky1981performance}
Krichevsky, R. and Trofimov, V.
\newblock The performance of universal encoding.
\newblock \emph{IEEE Transactions on Information Theory}, 27\penalty0
  (2):\penalty0 199--207, 1981.

\bibitem[McNee et~al.(2003)McNee, Lam, Konstan, and Riedl]{mcnee2003interfaces}
McNee, S.~M., Lam, S.~K., Konstan, J.~A., and Riedl, J.
\newblock Interfaces for eliciting new user preferences in recommender systems.
\newblock In \emph{International Conference on User Modeling}, pp.\  178--187.
  Springer, 2003.

\bibitem[Olkin \& Liu(2003)Olkin and Liu]{olkin2003bivariate}
Olkin, I. and Liu, R.
\newblock A bivariate beta distribution.
\newblock \emph{IEEE Transactions on Information Theory}, 62\penalty0
  (4):\penalty0 407--412, 2003.

\bibitem[Pandey \& Olston(2007)Pandey and Olston]{pandey2007handling}
Pandey, S. and Olston, C.
\newblock Handling advertisements of unknown quality in search advertising.
\newblock In \emph{NIPS}, pp.\  1065--1072, 2007.

\bibitem[Ranganath \& Blei(2017)Ranganath and Blei]{ranganath2017correlated}
Ranganath, R. and Blei, D.~M.
\newblock Correlated random measures.
\newblock \emph{Journal of the American Statistical Association}, pp.\  1--14,
  2017.

\bibitem[Rasmussen(2004)]{rasmussen2004gaussian}
Rasmussen, C.~E.
\newblock Gaussian processes in machine learning.
\newblock In \emph{Advanced lectures on machine learning}, pp.\  63--71.
  Springer, 2004.

\bibitem[Shahriari et~al.(2016)Shahriari, Swersky, Wang, Adams, and
  De~Freitas]{shahriari2016taking}
Shahriari, B., Swersky, K., Wang, Z., Adams, R.~P., and De~Freitas, N.
\newblock Taking the human out of the loop: A review of bayesian optimization.
\newblock \emph{Proceedings of the IEEE}, 104\penalty0 (1):\penalty0 148--175,
  2016.

\bibitem[Tokdar \& Ghosh(2007)Tokdar and Ghosh]{tokdar2007posterior}
Tokdar, S.~T. and Ghosh, J.~K.
\newblock Posterior consistency of logistic gaussian process priors in density
  estimation.
\newblock \emph{Journal of statistical planning and inference}, 137\penalty0
  (1):\penalty0 34--42, 2007.

\bibitem[van~der Vaart et~al.(2008)van~der Vaart, van Zanten,
  et~al.]{van2008rates}
van~der Vaart, A.~W., van Zanten, J.~H., et~al.
\newblock Rates of contraction of posterior distributions based on gaussian
  process priors.
\newblock \emph{The Annals of Statistics}, 36\penalty0 (3):\penalty0
  1435--1463, 2008.

\bibitem[Williams \& Rasmussen(1996)Williams and
  Rasmussen]{williams1996gaussian}
Williams, C.~K. and Rasmussen, C.~E.
\newblock Gaussian processes for regression.
\newblock In \emph{NIPS}, pp.\  514--520, 1996.

\bibitem[Wilson \& Ghahramani(2010)Wilson and Ghahramani]{wilson2010copula}
Wilson, A.~G. and Ghahramani, Z.
\newblock Copula processes.
\newblock In \emph{NIPS}, pp.\  2460--2468, 2010.

\end{thebibliography}
\bibliographystyle{icml2019}

\iftrue
\clearpage
 \onecolumn
 \appendix
 {\allowdisplaybreaks

\section{Proofs}\label{sec-app:proofs}

In this appendix, we provide all proofs for Theorems and Corollaries stated in the paper. We emphasize that we are aware of existing theoretical tools provided in ~\cite{van2008rates} and ~\cite{knapik2011bayesian}, but our approach is different and specific to the current setup.

\subsection{Proofs of point-wise Bayesian update in dynamic case}

\begin{theorem}
Suppose $\tilde{\func}(x) \sim \sum_{i=0}^n C_i^n \mathcal{B}(\alpha+i, \beta+n-i)$ with $\sum_{i=0}^n C_i^n = 1$, and we observe the result $s$ of a sample from a Bernoulli random variable with parameter $A\func(x) + B$. Then the Bayesian posterior for $\tilde{\func}(x)$  conditioned on this observation is:
\begin{equation}
\tilde{\func}(x|s) \sim \sum_{i=0}^{n+1} C_i^{n+1} \mathcal{B}(\theta, \alpha+i, \beta + n-i)
\end{equation}
where $\forall i=0,...,n+1$:
\begin{equation*}
C_i^{n+1} =  \frac{1}{E_s^n}(BC_i^n(\beta+n-i) + (A+B)C_{i-1}^n(\alpha + i - 1))
\end{equation*}
if $s=1$ and
\begin{equation*}
C_i^{n+1} =  \frac{1}{E_f^n}((1-B)C_i^n(\beta+n-i) + (1-A-B)C_{i-1}^n(\alpha + i - 1))
\end{equation*}
if $s=0$. $E_s^n$ and $E_f^n$ are normalization factors that ensure $\sum_{i=0}^n C_i^{n+1} = 1$. For simplicity of notation $C_{-1}^n = C_{n+1}^n = 0$ $\forall n$.
\end{theorem}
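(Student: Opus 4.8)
The plan is a direct Bayesian computation: multiply the prior mixture density by the likelihood of the single observation, re-express the product as a mixture of level-$(n+1)$ Beta densities, and read off the coefficients. Writing $f=\func(x)$ and denoting the Beta \emph{density} by $\text{Beta}(f;a,b)=f^{a-1}(1-f)^{b-1}/B(a,b)$, Bayes' rule gives that the posterior density is proportional to $\left(\sum_{i=0}^n C_i^n\,\text{Beta}(f;\alpha+i,\beta+n-i)\right)\ell(s\mid f)$, where the likelihood is $\ell(1\mid f)=Af+B$ for a success and $\ell(0\mid f)=1-Af-B$ for a failure.

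The two identities that drive everything are the parameter-raising relations
\begin{align*}
f\,\text{Beta}(f;a,b) &= \frac{a}{a+b}\,\text{Beta}(f;a+1,b), \\
(1-f)\,\text{Beta}(f;a,b) &= \frac{b}{a+b}\,\text{Beta}(f;a,b+1),
\end{align*}
which follow immediately from the ratios of Beta normalizing constants $B(a+1,b)/B(a,b)=a/(a+b)$ and $B(a,b+1)/B(a,b)=b/(a+b)$. First I would treat the success case: expanding $(Af+B)\,\text{Beta}(f;\alpha+i,\beta+n-i)$, the $f$-term raises the first parameter directly, while for the constant $B$-term I would insert the partition of unity $1=f+(1-f)$ to lift the level-$n$ Beta into a sum of two level-$(n+1)$ Betas (whose parameters now sum to $\alpha+\beta+n+1$). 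Every resulting term then has the form $\text{Beta}(f;\alpha+j,\beta+(n+1)-j)$ for some $j$, carrying a common prefactor $1/(\alpha+\beta+n)$.

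The only real work is collecting coefficients: each level-$n$ index $i$ sends mass to level-$(n+1)$ index $i+1$ (via the $f$-part, weighted by $\alpha+i$, picking up both the $A$ and the $B$ contributions) and to index $i$ (via the $(1-f)$-part, weighted by $\beta+n-i$, from the $B$ term only). Grouping by the output index $j$ produces the unnormalized coefficient $B\,C_j^n(\beta+n-j)+(A+B)\,C_{j-1}^n(\alpha+j-1)$, and absorbing the shared factor $1/(\alpha+\beta+n)$ together with the overall normalization into $E_s^n$ gives exactly the stated recursion; the boundary conventions $C_{-1}^n=C_{n+1}^n=0$ make the endpoint terms automatic. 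The failure case is identical after rewriting $1-Af-B=(1-B)(f+(1-f))-Af=(1-A-B)f+(1-B)(1-f)$ multiplied into the Beta, which replaces $B$ by $1-B$ on the diagonal and $A+B$ by $1-A-B$ on the off-diagonal. I expect no genuine obstacle, since the argument is a finite exact manipulation; the only care needed is in the index shuffling and in noting that $E_s^n$ (resp.\ $E_f^n$) equals the marginal probability of the observation under the prior and is therefore strictly positive, which guarantees $\sum_{i=0}^{n+1}C_i^{n+1}=1$ and that the posterior is well defined.
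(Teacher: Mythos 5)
Your proposal is correct and follows essentially the same route as the paper's proof: apply Bayes' rule to the mixture prior, rewrite the likelihood as $(A+B)\theta + B(1-\theta)$ (resp.\ $(1-A-B)\theta+(1-B)(1-\theta)$), use the Beta-function ratios $\mathrm{B}(a+1,b)/\mathrm{B}(a,b)=a/(a+b)$ and $\mathrm{B}(a,b+1)/\mathrm{B}(a,b)=b/(a+b)$ to raise parameters, and collect coefficients by output index. If anything, your bookkeeping is slightly more careful than the paper's, since you correctly record the posterior components as $\mathcal{B}(\alpha+j,\beta+(n+1)-j)$ so that the parameters sum to $\alpha+\beta+n+1$.
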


\begin{proof}

Suppose the observation is a success, i.e. $s=1$. Let $f_{\tilde{\func}(x)}:[0,1] \rightarrow [0,1]$ be the density function of the random variable $\tilde{\pi}(x)$, and let $f_{\tilde{\func}(x) | s=1}:[0,1] \rightarrow [0,1]$ be its the density function conditioned on this observation. Then,
\begin{align*}
f_{\tilde{\func}(x) | s=1}(\theta) &= \frac{Pr(s=1 | \tilde{\func}(x) = \theta) f_{\tilde{\func}(x)}(\theta)}{Pr(s=1)} \\
&\propto (A\theta + B) \sum_{i=0}^n C_i^n \mathcal{B}(\alpha+i, \beta+n-i) \\
&= (B(1-\theta) + (A + B)\theta) \sum_{i=0}^n C_i^n \frac{\theta^{\alpha+i-1}(1-\theta)^{\beta+n-i-1}}{\text{B}(\alpha+i, \beta+n-i)} \\
&= B \sum_{i=0}^n C_i^n \frac{\theta^{\alpha+i-1}(1-\theta)^{\beta+n-i}}{\text{B}(\alpha+i, \beta+n-i+1)}\frac{\text{B}(\alpha+i, \beta+n-i+1)}{\text{B}(\alpha+i, \beta+n-i)} \\
& + (A + B)\sum_{i=0}^n C_i^n \frac{\theta^{\alpha+i}(1-\theta)^{\beta+n-i-1}}{\text{B}(\alpha+i+1, \beta+n-i)}\frac{\text{B}(\alpha+i+1, \beta+n-i)}{\text{B}(\alpha+i, \beta+n-i)} \\
&= B \sum_{i=0}^n C_i^n \mathcal{B}(\alpha+i, \beta+n-i+1)\frac{\beta + n - i}{\alpha+\beta+n} \\
& + (A + B)\sum_{i=0}^n C_i^n \mathcal{B}(\alpha+i+1, \beta+n-i)\frac{\alpha + i}{\alpha+\beta+n} \\
&\propto \sum_{i=0}^{n+1} (BC_i^n(\beta+n-i) + (A+B)C_{i-1}^n(\alpha + i - 1)) \mathcal{B}(\alpha+i, \beta+n-i) \\
&\propto \sum_{i=0}^{n+1} C_i^{n+1} \mathcal{B}(\theta, \alpha+i, \beta + n-i)
\end{align*}

where $\text{B}$ is the Beta function, and satisfies $\frac{\text{B}(\alpha+1, \beta)}{\text{B}(\alpha, \beta)}= \frac{\alpha}{\alpha+\beta}$ and $\frac{\text{B}(\alpha, \beta+1)}{\text{B}(\alpha, \beta)}= \frac{\beta}{\alpha+\beta}$.

In order to ensure that this remains a probability distribution, coefficients $C_i^{n+1}$ must satisfy $\sum_{i=0}^{n+1} C_i^{n+1} = 1$. The result for $s=0$ can be showed similarly.
\end{proof}

Theorem~\ref{bayesian-update-single-step} is a special case of this result, for $n=0$. Corollary~\ref{bayesian-update-several-steps} directly follows from this theorem, by applying it recursively for each observations.

\begin{repcorollary}{bayesian-update-several-steps-simplified}
Suppose $\tilde{\func}(x) \sim \mathcal{B}(\alpha, \beta)$ and we observe the outputs of experiments $\mathcal{S} = \{(s_i, x,1-B,B)\}_{i=1,...,t}$ where $s_i \sim \text{Bernoulli}((1-B)\func(x) + B)$. Then the Bayesian posterior $\tilde{\func}(x|\mathcal{S})$  conditioned on these observations is given by
\begin{equation}
\tilde{\func}(x|\mathcal{S}) \sim \sum_{i=0}^{S} C_i^t \mathcal{B}(\alpha+i, \beta + t-i)
\end{equation}
where $S = \sum_{i=1}^t s_i$ is the total number of successes and
\begin{equation}
C_i^t \propto \binom{S}{i}(\alpha-1+i)!(\beta+t-1-i)!B^{S-i}
\label{CstBFastUpdate}
\end{equation}
$\forall i=0,...,S$. Using the relation $C_{i+1}^t = \frac{(S-i)(\alpha+i)}{B(i+1)(\beta+t-1-i)}C_i^t$, we can compute all $C_i^t$'s in time $\mathcal{O}(t)$.
\end{repcorollary}

\begin{proof}
We want to prove that the iterative process for computing the coefficients $C_i^t$'s in Corollary~\ref{bayesian-update-several-steps} ends with coefficients $C_i^t$'s of equation~\eqref{CstBFastUpdate}. We prove this by induction over $t$. For $t=0$, the result is obvious, since $S=0$, and $C_0^0=1$.

Now suppose the result is true for some time $n$ and let us prove that it remains true for time $n+1$. Let $S_n$ be the total number of successes observed up to time $n$, and let $s_{n+1}$ be the new observation at time $n+1$. Suppose $s_{n+1}=1$. Then $S_{n+1} = S_n +1$, and $\forall i=1,...,S_{n+1}$:
\begin{align*}
C_i^{n+1} &\propto BC_i^n(\beta+n-i) + C_{i-1}^n(\alpha+i-1) \\
&\propto \binom{S_n}{i}(\alpha-1+i)!(\beta+n-1-i)!B^{S_n+1-i}(\beta+n-i) \\
&+ \binom{S_n}{i-1}(\alpha-1+i-1)!(\beta+n-i)!B^{S_n+1-i}(\alpha+i-1) \\
&= \binom{S_{n+1}}{i}(\alpha-1+i)!(\beta+(n+1)-1-i)!B^{S_{n+1}-i} \\
\end{align*}

Similarly, if $s_{n+1}=0$, then $S_{n+1} = S_n$, and $\forall i=1,...,S_{n+1}$:
\begin{align*}
C_i^{n+1} &\propto (1-B)C_i^n(\beta+n-i) \\
&\propto \binom{S_{n+1}}{i}(\alpha-1+i)!(\beta+(n+1)-1-i)!B^{S_{n+1}-i}  \\
\end{align*}

In particular, we can see that the number of coefficients increases only when we observe a success.

\end{proof}

\subsection{Proof of convergence in the static case}\label{sec:proof-static}

\begin{reptheorem}{Static-convergence-theorem}\label{repthm:main-exp-theorem}
Let $\func:[0,1]^d \rightarrow [0,1]$ be $L$-Lipschitz continuous. Suppose we measure the results of experiments $\mathcal{S} = \{(x_i, s_i)\}_{i=1,...,t}$ where $s_i$ is a sample from a Bernoulli distribution with parameter $\func(x_i)$. Experiment points $\{x_i\}_{i=1,...,t}$ are assumed to be i.i.d. and uniformly distributed over the space. Then, starting with a uniform prior $\alpha(x) = \beta(x) = 1 \ \forall x\in [0,1]^d$, the posterior $\tilde{\func}(x|\mathcal{S})$ obtained from Algorithm~\ref{algo-static-setting} uniformly converges in $L_2$-norm to $\func(x)$, i.e.
\begin{equation}
\sup_{x \in [0,1]^d}\mathbb{E}_{\mathcal{S}} \left(\mathbb{E}\left((\tilde{\func}(x|\mathcal{S}) - \func(x))^2\right)\right) = \mathcal{O}\left(t^{-\frac{2}{d+2}}\right),
\end{equation}
where the outer expectation is performed over experiment points $\{x_i\}_{i=1,...,t}$ and their results $\{s_i\}_{i=1,...,t}$. Moreover, Algorithm~\ref{algo-static-setting} computes the posterior in time $\mathcal{O}(t)$.
\end{reptheorem}


\begin{proof}
For simplicity, suppose we start with a uniform prior for each $x$, i.e. $\tilde{\func}(x) \sim \mathcal{B}(1,1)$. Let $x\in \mathcal{X}$, $\Delta \in [0,1]$ be arbitrary. Suppose we fix the experiment points $X = \{x_i\}_{i=1,...,t}$ and that among these $t$ points, $n$ of them are at most $\Delta$ far from $x$ along all of $d$ dimensions. We assume without loss of generality that these points are $x_1,...,x_n$. Let $D_x$ be the random variable denoting the number of experiments occurring at most $\Delta$ far from $x$ along each dimension. Since we assume that experiment points $\{x_i\}_{i=1,...,t}$ are uniformly distributed over $[0,1]^d$, it follows that $D_x\sim \text{Bin}(t, \Delta^d)$.

Let $S_x$ denote the number of successes that occurred among these $n$ experiments. $S_x$ can be written as a $S_x = \sum_{i=1}^n s_i$ where $\bf{s}$ $= \{s_i\}_{i=1,...,n}$ are sampled independently, and $s_i\sim \text{Bernoulli}(\func(x_i))$ denotes whether experiment on $x_i$ was successful or not. Thus, $S_x$ follows a Poisson-Binomial distribution, and it follows:
\begin{equation}
\mathbb{E}(S_x | D_x=n) = \sum_{i=1}^n \func(x_i)
\end{equation}
and
\begin{equation}
\mathbb{E}(S_x^2 | D_x=n) = \sum_{i=1}^n \func(x_i)(1-\func(x_i)) + \left(\sum_{i=1}^n \func(x_i)\right)^2
\end{equation}

Note that after $s$ successes among $n$ experiments, the update rule~\ref{UpdateRule-static} leads to the posterior:
\begin{equation}
\tilde{\func}(x|\mathcal{S}) \sim \mathcal{B}(1+s, 1+n-s).
\end{equation}
Using the properties of the Beta distribution, we have:
\begin{equation}
\mathbb{E}(\tilde{\func}(x|\mathcal{S})|S_x=s, D_x=n) = \frac{s+1}{n+2}
\end{equation}
and
\begin{align*}
\mathbb{E}(\tilde{\func}(x|\mathcal{S})^2|S_x=s, D_x=n) &= \frac{(s+1)(n+1-s)}{(n+2)^2(n+3)} + \frac{(s+1)^2}{(n+2)^2} \\
&= \frac{(s+1)(s+2)}{(n+2)(n+3)} \\
&= \frac{s^2}{(n+2)^2} + \mathcal{O}\left(\frac{1}{n+1}\right)
\end{align*}

Therefore:
\begin{align*}
\mathbb{E}_{X,\bf{s}} &\left(\mathbb{E}\left((\tilde{\func}(x|\mathcal{S}) - \func(x))^2\right)\right) = \sum_{n=0}^t Pr(D_x=n) \mathbb{E}_{x_1,...,x_n}\left[\sum_{s=0}^n Pr(S_x=s | D_x=n)\left(\mathbb{E}(\tilde{\func}(x|\mathcal{S})^2|S_x=s,D_x=n) \right.\right.\\
&\left.\left. - 2\func(x)\mathbb{E}(\tilde{\func}(x|\mathcal{S})|S_x=s, D_x=n) + \func(x)^2\right)\right] \\
&= \sum_{n=0}^t Pr(D_x=n) \mathbb{E}_{x_1,...,x_n}\left[\sum_{s=0}^n Pr(S_x=s | D_x=n)\left(\frac{s^2}{(n+2)^2} + \mathcal{O}\left(\frac{1}{n+1}\right) - 2\func(x)\frac{s}{n+2} + \func(x)^2\right)\right] \\
&= \sum_{n=0}^t Pr(D_x=n) \mathbb{E}_{x_1,...,x_n}\left[\frac{1}{(n+2)^2}\left(\sum_{i=0}^n \func(x_i)(1-\func(x_i)) + \left(\sum_{i=0}^n \func(x_i)\right)^2 \right) \right. \\
&\left.\left. - \frac{2}{n+2}\func(x)\sum_{i=0}^n \func(x_i) + \func(x)^2  + \mathcal{O}\left(\frac{1}{n+1}\right) \right| \|x-x_i\|\leq \Delta \ \forall i = 1,...,n \right] \\
&= \sum_{n=0}^t Pr(D_x=n) \mathbb{E}_{x_1,...,x_n}\left[\frac{1}{(n+2)^2}\sum_{i=0}^n \func(x_i)(1-\func(x_i)) \right. \\
&\left. \left. + \frac{1}{(n+2)^2}\left(\sum_{i,j=0}^n (\func(x) - \func(x_i))(\func(x) - \func(x_j))\right)   + \mathcal{O}\left(\frac{1}{n+1}\right)\right| \|x-x_i\|\leq \Delta \ \forall i = 1,...,n \right] \\
&\leq \sum_{n=0}^t Pr(D_x=n) \left(\frac{1}{4(n+2)}+ \mathcal{O}\left(\frac{1}{n+1}\right)\right) + L^2\Delta^2 \\
&= L^2\Delta^2 + \mathcal{O}\left(\frac{1}{\Delta^d(t+1)}\right)
\end{align*}

Therefore, assuming $L > 0$, we can choose $\Delta = \frac{1}{L^{\frac{2}{d+2}}}t^{-\frac{1}{d+2}}$, and we obtain:
\begin{equation}
\mathbb{E}_{X,\bf{s}}\left(\mathbb{E}((\tilde{\func}(x) - \func(x))^2\right) = \mathcal{O}\left(L^{\frac{2d}{d+2}}t^{-\frac{2}{d+2}}\right)
\end{equation}
In particular, we observe that the smaller $L$, the larger $\Delta$. Indeed, the smoother the function, the more we can share experience between points $\{x_i\}$.
\end{proof}

\subsection{Proof of convergence in the simplified dynamic case}

\begin{reptheorem}{Dynamic-convergence-theorem}
Let $\func:[0,1]^d \rightarrow ]0,1]$ be $L$-Lipschitz continuous. Suppose we observe the results of experiments $\mathcal{S} = \{(x_i, s_i,1-B,B)\}_{i=1,...,t}$ where $s_i \sim \text{Bernoulli}((1-B_i)\func(x) + B_i)$. Experiment points $\{x_i\}_{i=1,...,t}$ are assumed to be uniformly distributed over the space. Then, $\forall x\in \mathcal{X}$, the posterior $\tilde{\func}(x|\mathcal{S})$ obtained from Algorithm~\ref{algo-dynamic-setting} converges in $L_2$-norm to $\func(x)$:
\begin{equation}
\mathbb{E}_{\mathcal{S}} \left(\mathbb{E}\left((\tilde{\func}(x) - \func(x))^2\right)\right) = \mathcal{O}\left(((1-B)t)^{-\frac{2}{d+2}}\right).
\end{equation}
Moreover, Algorithm~\ref{algo-dynamic-setting} computes the posterior in time $\mathcal{O}(t)$.\end{reptheorem}

\begin{proof}
Let $x\in \mathcal{X}$, $\Delta \in ]0,1]$ be arbitrary. Suppose we fix the experiment points $X$ and that among these $t$ points, $n$ of them are at most $\Delta$ far from $x$, i.e. $D_x = n$ where $D_x \sim \text{Bin}(t, \Delta^d)$ is the random variable as defined in~\ref{sec:proof-static}. We assume without loss of generality that these points are $x_1,...,x_n$. For simplicity, we treat the case where $\alpha = \beta = 1$, i.e. the prior for $\tilde{\func}(x)$ is uniform $\forall x\in \mathcal{X}$. Note that in this case, the coefficients $C_i$'s in Corollary~\ref{bayesian-update-several-steps-simplified} can be written as:
\begin{equation}
C_i^{n} =  \frac{1}{E'}\binom{n-i}{S-i}B^{S-i},
\end{equation}
$i=0,...,S$ where $E'$ is the normalization factor and $S$ is the number of observed successes.

\begin{align*}
\mathbb{E}_{\bf{s}}\left[\mathbb{E}(\tilde{\func}(x|\mathcal{S})) | D_x = n\right]  &= \sum_{s=0}^n Pr(S_x=s) \sum_{i=0}^s C_i^{n,s}(x) \frac{i+1}{n+2} \\
&= \sum_{s=0}^n Pr(S_x=s) \frac{\sum_{i=0}^s \binom{n-i}{s-i} B^{s-i} \frac{i+1}{n+2}}{\sum_{j=0}^s \binom{n-j}{s-j} B^{s-j}} \\
&= \sum_{s=0}^n Pr(S_x=s) \left(\frac{s+1}{n+2} - \frac{\sum_{i=0}^s \binom{n-s+i}{i} B^{i} \frac{i}{n+2}}{\sum_{j=0}^s \binom{n-s+j}{j} B^{j}}\right) \\
&= \sum_{s=0}^n Pr(S_x=s) \left(\frac{s+1}{n+2} - \frac{B}{1-B}\left(1-\frac{s+1}{n+2}\right)\left(1 - \frac{\binom{n+1}{s} B^{s}}{\sum_{j=0}^s \binom{n+1}{j} B^{j}(1-B)^{s-j}}\right)\right) \\
&= \frac{1+\sum_{i=1}^n(B + (1-B)\func(x_i))}{(n+2)(1-B)} - \frac{B}{1-B} \\
&+ \frac{B}{1-B}\sum_{s=0}^n Pr(S_x=s) \left(1-\frac{s+1}{n+2}\right)\frac{\binom{n+1}{s} B^{s}(1-B)^{n-s+1}}{\sum_{j=0}^s \binom{n+1}{j} B^{j}(1-B)^{n+1-j}} \\
&= \frac{\sum_{i=1}^n\func(x_i)}{n+2} + \frac{1 -2B}{(1-B)(n+2)} \\
&+ \frac{B}{1-B}\sum_{s=0}^n Pr(S_x=s) \left(1-\frac{s+1}{n+2}\right)\frac{\binom{n+1}{s} B^{s}(1-B)^{n-s+1}}{\sum_{j=0}^s \binom{n+1}{j} B^{j}(1-B)^{n+1-j}} \\
\end{align*}

At the fourth equality, we used the fact that $\sum_{j=0}^s \binom{n-j}{s-j} B^{s-j} = \sum_{j=0}^s \binom{n+1}{j} B^{j}(1-B)^{s-j}$, which can be shown by induction over $s$. We also used the following calculations:
\begin{align*}
\sum_{i=0}^s \binom{n-s+i}{i} B^{i}i &= (n-s+1)\sum_{i=1}^s \binom{n-s+i}{i-1} B^{i} \\
&= B(n-s+1)\sum_{i=0}^{s-1} \binom{n-s+1+i}{i} B^{i} \\
&= B(n-s+1)\left(\sum_{i=0}^{s-1} \binom{n-s+i}{i} B^{i} + \sum_{i=1}^{s-1} \binom{n-s+i}{i-1} B^{i} \right) \\
&= B(n-s+1)\left(\sum_{i=0}^{s} \binom{n-s+i}{i} B^{i} - \binom{n}{s}B^s + B\sum_{i=0}^{s-1} \binom{n-s+1+i}{i} B^{i} - \binom{n+1}{s}B^s \right)
\end{align*}
Therefore, by equaling lines $2$ and $4$ and using $\binom{n+1}{s+1} = \binom{n+1}{s}+\binom{n}{s}$, we get:
\begin{equation}
\sum_{i=0}^{s-1} \binom{n-s+1+i}{i} B^{i} = \frac{1}{1-B}\left(\sum_{i=0}^{s} \binom{n-s+i}{i} B^{i} - \binom{n+1}{s+1}B^s\right)
\label{binomialRed1}
\end{equation}
Thus:
\begin{equation}
\sum_{i=0}^s \binom{n-s+i}{i} B^{i}i = \frac{B}{1-B}\left(1-\frac{s+1}{n+2}\right)\left(\sum_{i=0}^{s} \binom{n-s+i}{i} B^{i} - \binom{n+1}{s+1}B^s\right)
\end{equation}

Let $Z \sim \text{Bin}(n+1, B)$. Then:
\begin{equation}
\left|\sum_{s=0}^n Pr(S_x=s) \left(1-\frac{s+1}{n+2}\right)\frac{\binom{n+1}{s} B^{s}(1-B)^{n-s+1}}{\sum_{j=0}^s \binom{n+1}{j} B^{j}(1-B)^{n+1-j}}\right| \leq \sum_{s=0}^t Pr(S_x=s) \frac{Pr(Z=s)}{Pr(Z\leq s)}
\end{equation}
We know that $\mathbb{E}(Z) = (n+1)B$ and $\mathbb{E}(S_x) = nB + \sum_{i=1}^n (1-B)\func(x_i)$. We then have:

\begin{align*}
\sum_{s=0}^n Pr(S_x=s) \frac{Pr(Z=s)}{Pr(Z\leq s)}&= \sum_{s=0}^{\frac{\mathbb{E}(Z) + \mathbb{E}(S_x)}{2}} Pr(S_x=s) \frac{Pr(Z=s)}{Pr(Z\leq s)} + \sum_{s=\frac{\mathbb{E}(Z) + \mathbb{E}(S_x)}{2}+1}^n Pr(S_x=s) \frac{Pr(Z=s)}{Pr(Z\leq s)} \\
&\leq Pr\left(S_x\leq \frac{\mathbb{E}(Z) + \mathbb{E}(S_x)}{2}\right) + 2Pr\left(Z\geq \frac{\mathbb{E}(Z) + \mathbb{E}(S_x)}{2}\right) \\
&\leq 3e^{-\frac{(\mathbb{E}(S_x) - \mathbb{E}(Z))^2}{2n}} \\
&\leq Ce^{-\frac{(1-B)^2\bar{\func}n}{2}}
\end{align*}
where $C\in \mathbb{R}$, $\bar{\func} = \frac{1}{n}\sum_{i=1}^n \func(x_i) > 0$. In the second step, we used $Pr(Z \leq s) \geq \frac{1}{2}$ for any $s\geq \mathbb{E}(Z)$. The last step follows from Hoeffding's inequality. So the previous upper bound decays exponentially to $0$. We thus have:
\begin{equation}
\mathbb{E}_{\bf{s}}\left[\mathbb{E}(\tilde{\func}(x|\mathcal{S}))| D_x=n\right] = \frac{\sum_{i=1}^n\func(x_i)}{n+2} + \frac{1-2B}{(1-B)(n+2)}
\end{equation}

We now bound the second moment of $\tilde{\func}(x|\mathcal{S})$. With the same notations as previously, we have:
\begin{align*}
\mathbb{E}_{\bf{s}}&\left[\mathbb{E}(\tilde{\func}(x|\mathcal{S})^2)  | D_x=n \right] = \sum_{s=0}^n Pr(S_x=s|D_x=n) \sum_{i=0}^s C_i^{n,s} \frac{(i+1)(i+2)}{(n+2)(n+3)} \\
&= \sum_{s=0}^n Pr(S_x=s|D_x=n)\left( \frac{(s+1)(s+2)}{(n+2)(n+3)} - 2\frac{s+1}{n+3}\sum_{i=0}^s C_{s-i}^{n,s} \frac{i}{n+2} + \sum_{i=0}^s C_{s-i}^{n,s} \frac{i(i-1)}{(n+2)(n+3)} + \mathcal{O}\left(\frac{1}{n+2}\right)\right) \\
&= \sum_{s=0}^n Pr(S_x=s|D_x=n)\left( \frac{(s+1)(s+2)}{(n+2)(n+3)} - 2\frac{B}{1-B} \frac{(s+1)(n-s+1)}{(n+2)(n+3)}\left(1 - \frac{\binom{n+1}{s} B^{s}}{\sum_{j=0}^s \binom{n+1}{j} B^{j}(1-B)^{s-j}} \right) \right. \\
& \left. +\frac{B^2}{1-B^2}\frac{(n-s+1)(n-s+2)}{(n+2)(n+3)}\left(\frac{1+B}{1-B} - \frac{2\binom{n+1}{s}\frac{B^{s+1}}{1-B} + \binom{n+2}{s}B^s + \binom{n+1}{s-1}B^{s-1}}{\sum_{j=0}^s \binom{n+1}{j} B^{j}(1-B)^{s-j}}\right) \right) \\
&= \frac{1}{(n+2)(n+3)}\sum_{s=0}^n Pr(S_x=s|D_x=n)\left( \frac{s^2}{(1-B)^2} - 2sn\frac{B}{(1-B)^2} + \frac{B^2}{(1-B)^2}n^2\right) + \mathcal{O}\left(\frac{1}{(1-B) (n+2)}\right) \\
&= \frac{1}{(1-B)^2(n+2)^2}\sum_{s=0}^n Pr(S_x=s|D_x=n)\left(\left(\sum_{i=1}^n(B + (1-B)\func(x_i))\right)^2 \right. \\
& \left. - 2Bn \sum_{i=1}^n\left(B + (1-B)\func(x_i)\right) + B^2n^2\right) + \mathcal{O}\left(\frac{1}{(1-B) (n+2)}\right) \\
&= \frac{1}{(n+2)^2}\sum_{s=0}^n Pr(S_x=s|D_x=n)\left(\sum_{i=1}^n\func(x_i)\right)^2 + \mathcal{O}\left(\frac{1}{(1-B) (n+2)}\right)
\end{align*}
where the four terms with denominator $\sum_{j=0}^s \binom{n+1}{j} B^{j}(1-B)^{s-j}$ in the third line can be shown to decay exponentially fast to $0$ similarly as previously. We computed $\sum_{i=0}^s C_{s-i}^{n,s} \frac{i(i-1)}{(n+2)(n+3)}$ in the second line using similar calculations as were done for $\sum_{i=0}^s C_{s-i}^{n,s} \frac{i}{n+2}$:
\begin{equation}
\sum_{i=0}^s \binom{n-s+i}{i} B^{i}i(i-1) = B^2(n-s+1)(n-s+2)\sum_{i=0}^{s-2} \binom{n-s+2+i}{i} B^{i}
\end{equation}
Using the identity $\binom{n+2}{k+2} = \binom{n}{k+2} + 2\binom{n}{k+1}+\binom{n}{k}$, we have:
\begin{align*}
\sum_{i=0}^{s-2} \binom{n-s+2+i}{i} B^{i} &= \sum_{i=0}^{s-2} \binom{n-s+i}{i} B^{i} + 2\sum_{i=1}^{s-2} \binom{n-s+i}{i-1} B^{i} + \sum_{i=2}^{s-2} \binom{n-s+i}{i-2} B^{i} \\
&= \sum_{i=0}^{s-2} \binom{n-s+i}{i} B^{i} + 2\sum_{i=1}^{s-3} \binom{n-s+i+1}{i} B^{i+1} + \sum_{i=2}^{s-3} \binom{n-s+i+2}{i} B^{i+2} \\
&= \sum_{i=0}^{s} \binom{n-s+i}{i} B^{i} - \binom{n-1}{s-1}B^{s-1} - \binom{n}{s}B^{s} \\
&+ 2B\sum_{i=1}^{s-1} \binom{n-s+i+1}{i} B^{i} - 2\binom{n-1}{s-2}B^{s-1} - 2\binom{n}{s-1}B^{s} \\
&+ B^2\sum_{i=2}^{s-2} \binom{n-s+i+2}{i} B^{i} - \binom{n-1}{s-3}B^{s-1} - \binom{n}{s-2}B^{s}
\end{align*}

Therefore, by isolating the term $\sum_{i=0}^{s-2} \binom{n-s+2+i}{i} B^{i}$, simplifying binomial coefficients and using equation~\eqref{binomialRed1}, we get:
\begin{align*}
\sum_{i=0}^{s-2} \binom{n-s+2+i}{i} B^{i} &= \frac{1}{1-B^2}\left(\frac{1+B}{1-B}\sum_{i=0}^{s} \binom{n-s+i}{i} B^{i} - \binom{n+2}{s}B^{s} - 2\binom{n+1}{s}\frac{B^{s+1}}{1-B} \right. \\
&\left.- \binom{n+1}{s-1}B^{s-1}\right)
\end{align*}

Thus:
\begin{align*}
\mathbb{E}_{\bf{s}}&\left[\mathbb{E}((\tilde{\func}(x|\mathcal{S})-\func(x))^2)  | D_x=n \right] = \mathbb{E}_{\bf{s}}\left[\mathbb{E}(\tilde{\func}(x|\mathcal{S})^2)  | D_x=n \right] - 2\func(x)\mathbb{E}_{\bf{s}}\left[\mathbb{E}\tilde{\func}(x|\mathcal{S})  | D_x=n \right] + \func(x)^2 \\
&= \sum_{s=0}^n Pr(S_x=s|D_x=n)\left(\left(\frac{\sum_{i=1}^n\func(x_i)}{n+1}\right)^2 -2\func(x)\frac{\sum_{i=1}^n\func(x_i)}{n+2} + \func(x)^2\right)+ \mathcal{O}\left(\frac{1}{(1-B) (n+2)}\right) \\
&= \sum_{s=0}^n Pr(S_x=s|D_x=n)\left(\frac{\sum_{i,j=1}^n(\func(x_i)-\func(x))(\func(x_j)-\func(x))}{(n+1)^2}\right) + \mathcal{O}\left(\frac{1}{(1-B) (n+2)}\right) \\
&\leq L^2\Delta^2 + \mathcal{O}\left(\frac{1}{(1-B) (n+2)}\right)
\end{align*}

By taking the expectation over $X$, we finally get:
\begin{align*}
\mathbb{E}_{X,\bf{s}}&\left[\mathbb{E}((\tilde{\func}(x|\mathcal{S})-\func(x))^2)\right] = \sum_{n=0}^t Pr(D_x=n)\mathbb{E}_{\bf{s}}\left[\mathbb{E}((\tilde{\func}(x|\mathcal{S})-\func(x))^2)  | D_x=n\right] \\
&\leq L^2\Delta^2  + \mathcal{O}\left(\frac{1}{(1-B)\Delta^d t}\right)
\end{align*}
If we choose $\Delta = \frac{1}{L^{\frac{2}{d+2}}}((1-B)t)^{-\frac{1}{d+2}}$, we obtain the desired result.

\end{proof}

\subsection{Proof of convergence in the general dynamic case}

\begin{reptheorem}{Dynamic-convergence-theorem-general}
Let $\func:[0,1]^d \rightarrow ]0,1]$ be $L$-Lipschitz continuous. Suppose we observe the results of experiments $\mathcal{S} = \{(x_i, s_i,1-B_i,B_i)\}_{i=1,...,t}$ where $s_i \sim \text{Bernoulli}((1-(B_i+\epsilon_i))\func(x_i) + B_i+\epsilon_i)$, i.e. contextual features are noisy. We assume $\epsilon_i$'s are independent random variables with zero mean and variance $\sigma^2$. Experiment points $\{x_i\}_{i=1,...,t}$ are assumed to be uniformly distributed over the space. Then, $\forall x\in \mathcal{X}$, the posterior $\tilde{\func}(x|\mathcal{S})$ obtained from Algorithm~\ref{algo-dynamic-setting-general} converges in $L_2$-norm to $\func(x)$ :
\begin{equation}
\mathbb{E}_{\mathcal{S}} \left(\mathbb{E}\left((\tilde{\func}(x|\mathcal{S}) - \func(x))^2\right)\right) = \mathcal{O}\left(c(B,\sigma^2)t^{-\frac{2}{d+2}}\right),
\end{equation}
where $c(B,\sigma^2)$ is a constant depending on $\{B_i\}_{i=1,...,t}$ and the noise $\sigma^2$. Moreover, Algorithm~\ref{algo-dynamic-setting-general} computes the posterior in time $\mathcal{O}(t)$.
\end{reptheorem}

\begin{proof}
The proof of theorem~\ref{Dynamic-convergence-theorem} can be completely adapted to this new setting. Let $x\in \mathcal{X}$, $\Delta \in [0,1]$ be arbitrary. Suppose we fix the experiment points $X$ and that among these $t$ points, $n$ of them are at most $\Delta$ far from $x$. We assume without loss of generality that these points are $x_1,...,x_n$. We then define $B_X = \frac{1}{n}\sum_{i=1}^n B_i$.

\begin{align*}
\mathbb{E}_{\bf{s}}\left[\mathbb{E}(\tilde{\func}(x|\mathcal{S})) | D_x = n\right]  &= \sum_{s=0}^n Pr(S_x=s) \sum_{i=0}^s C_i^{n,s}(x) \frac{i+1}{n+2} \\
&= \sum_{s=0}^n Pr(S_x=s) \frac{\sum_{i=0}^s \binom{n-i}{s-i} B_X^{s-i} \frac{i+1}{n+2}}{\sum_{j=0}^s \binom{n-j}{s-j} B_X^{s-j}} \\
&= \sum_{s=0}^n Pr(S_x=s) \left(\frac{s+1}{n+2} - \frac{\sum_{i=0}^s \binom{n-s+i}{i} B_X^{i} \frac{i}{n+2}}{\sum_{j=0}^s \binom{n-s+j}{j} B_X^{j}}\right) \\
&= \sum_{s=0}^n Pr(S_x=s) \left(\frac{s+1}{n+2} - \frac{B_X}{1-B_X}\left(1-\frac{s+1}{n+2}\right)\left(1 - \frac{\binom{n+1}{s} B_X^{s}}{\sum_{j=0}^s \binom{n+1}{j} B_X^{j}(1-B_X)^{s-j}}\right)\right) \\
&= \frac{1+\sum_{i=1}^n(B_i + \epsilon_i + (1-B_i - \epsilon_i)\func(x_i))}{(n+2)(1-B_X)} - \frac{B_X}{1-B_X} \\
&+ \frac{B_X}{1-B_X}\sum_{s=0}^n Pr(S_x=s) \left(1-\frac{s+1}{n+2}\right)\frac{\binom{n+1}{s} B_X^{s}(1-B_X)^{n-s+1}}{\sum_{j=0}^s \binom{n+1}{j} B_X^{j}(1-B_X)^{n+1-j}} \\
&= \frac{\sum_{i=1}^n(1-B_i)\func(x_i)}{(1-B_X)(n+2)} + \frac{1 -2B+ \sum_{i=1}^n\epsilon_i(1-\func(x_i))}{(1-B_X)(t+2)} \\
&+ \frac{B_X}{1-B_X}\sum_{s=0}^n Pr(S_x=s) \left(1-\frac{s+1}{n+2}\right)\frac{\binom{n+1}{s} B_X^{s}(1-B_X)^{n-s+1}}{\sum_{j=0}^s \binom{n+1}{j} B_X^{j}(1-B_X)^{n+1-j}} \\
\end{align*}

Let $Z \sim \text{Bin}(n+1, B_X)$. Then:
\begin{equation}
\left|\sum_{s=0}^n Pr(S_x=s) \left(1-\frac{s+1}{n+2}\right)\frac{\binom{n+1}{s} B_X^{s}(1-B_X)^{n-s+1}}{\sum_{j=0}^s \binom{n+1}{j} B_X^{j}(1-B_X)^{n+1-j}}\right| \leq \sum_{s=0}^t Pr(S_x=s) \frac{Pr(Z=s)}{Pr(Z\leq s)}
\end{equation}
We know that $\mathbb{E}(Z) = (n+1)B_X$ and $\mathbb{E}(S_x) = nB_X + \sum_{i=1}^n (1-B_i)\func(x_i) + \sum_{i=1}^n \epsilon_i(1 - \func(x_i))$. Since $\mathbb{E}(\epsilon_i) =0$, then $\mathbb{E}(S_x) - \mathbb{E}(Z)$ will also increase linearly with $n$ and thus the previous upper bound also decreases exponentially with $n$ to $0$ with very high probability. We thus have:

\begin{align*}
\mathbb{E}_{\bf{s},\epsilon}\left[\mathbb{E}(\tilde{\func}(x|\mathcal{S}))| D_x=n\right] &= \frac{\sum_{i=1}^n(1-B_i)\func(x_i)}{(1-B_X)(n+2)} + \mathbb{E}_{\epsilon}\left[\frac{\sum_{i=1}^n\epsilon_i(1-\func(x_i))}{(1-B_X)(t+2)}\right] + \mathcal{O}\left(\frac{1}{(1-B_X)(n+2)}\right) \\
&= \frac{\sum_{i=1}^n(1-B_i)\func(x_i)}{(1-B_X)(n+2)} + \mathcal{O}\left(\frac{1}{(1-B_X)(n+2)}\right)
\end{align*}

We now bound the second moment of $\tilde{\func}(x|\mathcal{S})$. With the same notations as previously, we have:
\begin{align*}
\mathbb{E}_{S}&\left[\mathbb{E}(\tilde{\func}(x|\mathcal{S})^2)  | D_x=n \right] = \sum_{s=0}^n Pr(S_x=s|D_x=n) \sum_{i=0}^s C_i^{n,s} \frac{(i+1)(i+2)}{(n+2)(n+3)} \\
&= \sum_{s=0}^n Pr(S_x=s|D_x=n)\left( \frac{(s+1)(s+2)}{(n+2)(n+3)} - 2\frac{s+1}{n+3}\sum_{i=0}^s C_{s-i}^{n,s} \frac{i}{n+2} + \sum_{i=0}^s C_{s-i}^{n,s} \frac{i(i-1)}{(n+2)(n+3)} + \mathcal{O}\left(\frac{1}{n+2}\right)\right) \\
&= \sum_{s=0}^n Pr(S_x=s|D_x=n)\left( \frac{(s+1)(s+2)}{(n+2)(n+3)} - 2\frac{B_X}{1-B_X} \frac{(s+1)(n-s+1)}{(n+2)(n+3)}\left(1 - \frac{\binom{n+1}{s} B_X^{s}}{\sum_{j=0}^s \binom{n+1}{j} B_X^{j}(1-B_X)^{s-j}} \right) \right. \\
& \left. +\frac{B_X^2}{1-B_X^2}\frac{(n-s+1)(n-s+2)}{(n+2)(n+3)}\left(\frac{1+B_X}{1-B_X} - \frac{2\binom{n+1}{s}\frac{B_X^{s+1}}{1-B_X} + \binom{n+2}{s}B_X^s + \binom{n+1}{s-1}B_X^{s-1}}{\sum_{j=0}^s \binom{n+1}{j} B_X^{j}(1-B_X)^{s-j}}\right) \right) \\
&= \frac{1}{(n+2)(n+3)}\sum_{s=0}^n Pr(S_x=s|D_x=n)\left( \frac{s^2}{(1-B_X)^2} - 2sn\frac{B_X}{(1-B_X)^2} + \frac{B_X^2}{(1-B_X)^2}n^2\right) + \mathcal{O}\left(\frac{1}{(1-B_X) (n+2)}\right) \\
&= \frac{1}{(1-B_X)^2(n+2)^2}\sum_{s=0}^n Pr(S_x=s|D_x=n)\left(\sum_{i=1}^n\left(B_i+\epsilon_i + (1-B_i-\epsilon_i)\func(x_i)\right)^2 \right. \\
& \left. - 2B_Xn \sum_{i=1}^n\left(B_i+\epsilon_i + (1-B_i-\epsilon_i)\func(x_i)\right) + B_X^2n^2\right) + \mathcal{O}\left(\frac{1}{(1-B_X) (n+2)}\right) \\
&= \frac{1}{(1-B_X)^2(n+2)^2}\sum_{s=0}^n Pr(S_x=s|D_x=n)\left(2\sum_{i,j=1}^n\epsilon_i(1-\func(x_i))\func(x_j) + \sum_{i,j=1}^n\epsilon_i\epsilon_j(1-\func(x_i))(1-\func(x_j)) \right. \\
&\left. + \left(\sum_{i=1}^n(1-B_i)\func(x_i)\right)^2\right) + \mathcal{O}\left(\frac{1}{(1-B_X) (n+2)}\right) \\
\end{align*}
where the four terms with denominator $\sum_{j=0}^s \binom{n+1}{j} B_X^{j}(1-B_X)^{s-j}$ in the third line can be shown to decay exponentially fast to $0$ similarly as previously. Taking the expectation over $\epsilon$, we then get:
\begin{align*}
\mathbb{E}_{S,\epsilon}&\left[\mathbb{E}(\tilde{\func}^t(x)^2)  | D_x=n \right] = \sum_{s=0}^n Pr(S_x=s|D_x=n)\left(\frac{\sum_{i=1}^n(1-B_i)\func(x_i)}{(1-B_X)(n+1)}\right)^2 \\
&+ \mathcal{O}\left(\frac{1}{(1-B_X) (n+1)} + \frac{\sigma^2}{(1-B_X)^2 (n+1)}\right)
\end{align*}

Thus:
\begin{align*}
\mathbb{E}_{\bf{s},\epsilon}&\left[\mathbb{E}((\tilde{\func}(x|\mathcal{S})-\func(x))^2)  | D_x=n \right] = \mathbb{E}_{\bf{s},\epsilon}\left[\mathbb{E}(\tilde{\func}(x|\mathcal{S})^2)  | D_x=n \right] - 2\func(x)\mathbb{E}_{S,\epsilon}\left[\mathbb{E}(\tilde{\func}(x|\mathcal{S}))  | D_x=n \right] + \func(x)^2 \\
&= \sum_{s=0}^n Pr(S_x=s|D_x=n)\left(\left(\frac{\sum_{i=1}^n(1-B_i)\func(x_i)}{(1-B_X)(n+1)}\right)^2 -2\func(x)\frac{\sum_{i=1}^n(1-B_i)\func(x_i)}{(1-B_X)(n+2)} + \func(x)^2\right)\\
&+ \mathcal{O}\left(\frac{1}{(1-B_X) (n+2)} + \frac{\sigma^2}{(1-B_X)^2 (n+2)}\right) \\
&= \sum_{s=0}^n Pr(S_x=s|D_x=n)\left(\frac{\sum_{i,j=1}^n(1-B_i)(1-B_j)(\func(x_i)-\func(x))(\func(x_j)-\func(x))}{(1-B_X)^2(n+2)^2}\right) \\
&+ \mathcal{O}\left(\frac{1}{(1-B_X) (n+2)} + \frac{\sigma^2}{(1-B_X)^2 (n+2)}\right) \\
&\leq L^2\Delta^2 + \mathcal{O}\left(\frac{1}{(1-B_X) (n+2)} + \frac{\sigma^2}{(1-B_X)^2 (n+2)}\right)
\end{align*}

Finally, by taking the expectation over experiment points $X$, we get:
\begin{align*}
\mathbb{E}_{X,\bf{s},\epsilon}&\left[\mathbb{E}((\tilde{\func}(x|\mathcal{S})-\func(x))^2)\right] = \sum_{n=0}^t Pr(D_x=n)\mathbb{E}_{S,\epsilon}\left[\mathbb{E}((\tilde{\func}^t(x)-\func(x))^2)  | D_x=n \right] \\
&\leq L^2\Delta^2  + \mathcal{O}\left(\frac{C^{(1)}}{\Delta^d t} + \frac{C^{(2)}\sigma^2}{\Delta^dt}\right) \\
\end{align*}
where $C^{(i)} = \mathbb{E}_X\left[\frac{1}{(1-B_X)^i}\right]$. Therefore, if we choose $\Delta = \frac{1}{L^{\frac{2}{d+2}}}t^{-\frac{1}{d+2}}$, then we obtain the desired result.

\end{proof}

	\section{Smooth Beta processes for classification} \label{sec:classification}
In this appendix, we extend the convergence rates in $L_2$ function approximation to $L_1$ and Bayes risk (misclassification error). These are to be understood as corollaries to the proofs presented in Sec.~\ref{sec-app:proofs}. Furthermore, we establish the connection between SBPs in the static setting and nearest neighbor techniques. However, our method allows for precise prior knowledge injection, whose efficiency is empirically demonstrated on a synthetic classification experiment.

\subsection{Convergence in $L_1$ norm}\label{sec-app:conv-l1}
Leaving out constants, Theorems~\ref{Static-convergence-theorem}, \ref{Dynamic-convergence-theorem}, and \ref{Dynamic-convergence-theorem-general} provide convergence rates of the type $\mathcal{O} \left(t^{-\frac{2}{d+2}} \right)$. In all three settings, we obtain the following corollary for the error in $L_1$ norm:

\begin{corollary}[Convergence in $L_1$]\label{L1-theorem}
Under the assumptions of Theorems~\ref{Static-convergence-theorem}, \ref{Dynamic-convergence-theorem}, and \ref{Dynamic-convergence-theorem-general}, the corresponding Algorithms~\ref{algo-static-setting}, \ref{algo-dynamic-setting}, and \ref{algo-dynamic-setting-general} converge in $L_1$ norm to $\func(x)$:
	\begin{equation*}
		\sup_{x \in [0,1]^d}\mathbb{E}_{\mathcal{S}} \left(\mathbb{E} \left| \tilde{\func}(x|\mathcal{S}) - \func(x) \right| \right) = \mathcal{O}\left(t^{-\frac{1}{d+2}}\right),
	\end{equation*}
where we leave out the constants of the respective theorems.
\end{corollary}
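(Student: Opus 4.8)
The plan is to obtain the $L_1$ bound as a purely formal consequence of the already-established $L_2$ bounds, so that no new analysis of the posterior is needed. The single governing observation is that the square-root map is concave, which lets us move an expectation inside a square root at the cost of an inequality (Jensen), converting a second-moment control into a first-moment control.

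First I would fix $x \in [0,1]^d$ and let $\mathbb{E}_{\mathrm{tot}}$ denote the joint expectation over the experiment points and outcomes $\mathcal{S}$ together with the posterior randomness of $\tilde{\func}(x|\mathcal{S})$; this is legitimate because $\mathbb{E}_{\mathcal{S}}(\mathbb{E}(\cdot))$ is an iterated expectation, hence by the tower property a single expectation over the product space. Setting $W = (\tilde{\func}(x|\mathcal{S}) - \func(x))^2 \geq 0$, we have $\sqrt{W} = |\tilde{\func}(x|\mathcal{S}) - \func(x)|$, and Jensen's inequality applied to the concave map $w \mapsto \sqrt{w}$ gives
\begin{equation*}
\mathbb{E}_{\mathcal{S}}\left(\mathbb{E}\left|\tilde{\func}(x|\mathcal{S}) - \func(x)\right|\right) = \mathbb{E}_{\mathrm{tot}}\left[\sqrt{W}\right] \leq \sqrt{\mathbb{E}_{\mathrm{tot}}[W]} = \sqrt{\mathbb{E}_{\mathcal{S}}\left(\mathbb{E}\left((\tilde{\func}(x|\mathcal{S}) - \func(x))^2\right)\right)}.
\end{equation*}

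Next I would take the supremum over $x \in [0,1]^d$ on both sides. Since $\sqrt{\cdot}$ is monotone increasing, the supremum commutes with the square root, so the right-hand side equals the square root of the $L_2$ quantity controlled in Theorems~\ref{Static-convergence-theorem}, \ref{Dynamic-convergence-theorem}, and \ref{Dynamic-convergence-theorem-general}. Substituting the common rate $\mathcal{O}\!\left(t^{-\frac{2}{d+2}}\right)$ (dropping the multiplicative constants as stated in the corollary) yields $\sqrt{\mathcal{O}\!\left(t^{-\frac{2}{d+2}}\right)} = \mathcal{O}\!\left(t^{-\frac{1}{d+2}}\right)$, which is exactly the claimed bound, simultaneously for all three algorithms.

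I do not expect a genuine obstacle: the corollary is a direct deduction from the $L_2$ result, and the only point deserving a line of care is the identification of the iterated expectation $\mathbb{E}_{\mathcal{S}}(\mathbb{E}(\cdot))$ with a single joint expectation, so that Jensen may be applied once to the whole quantity rather than separately to the inner and outer expectations. Once that is stated, the argument is complete.
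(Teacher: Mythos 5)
Your proposal is correct and follows essentially the same route as the paper: a single application of Jensen's inequality to the concave square-root map, reducing the $L_1$ bound to the square root of the established $L_2$ rates. Your extra remark identifying the iterated expectation with a joint expectation via the tower property is a harmless elaboration of the same argument.
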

\begin{proof}
	For all three cases, the statement follows from the application of Jensen's inequality. We have
	\begin{equation}
	\mathbb{E}_{\mathcal{S}} \left( \mathbb{E} \left| \tilde{\func}(x|\mathcal{S}) - \func(x) \right| \right) 
	= \mathbb{E}_{\mathcal{S}} \left( \mathbb{E} \left( \sqrt{\left( \tilde{\func}(x|\mathcal{S}) - \func(x) \right)^2} \right) \right)
	\leq \sqrt{ \mathbb{E}_{\mathcal{S}} \left( \mathbb{E} \left( \left( \tilde{\func}(x|\mathcal{S}) - \func(x) \right)^2 \right) \right)}, \label{eq-app:jensen}
\end{equation}
which yields the presented convergence rates by taking the square root of the rates of the respective Theorems for $L_2$ convergence.

\end{proof}

\subsection{Convergence in Bayes risk}
In the classification setting, it is natural to use the posterior predictive of the Beta-Bernoulli model. Therefore, we have the classifier $\tilde{s}(x|\mathcal{S})$ based on the posterior parameters $\tilde{\alpha}(x), \tilde{\beta}(x)$:
\begin{align}
	\tilde{s}(x|\mathcal S) = 
	\begin{cases}
		1 \qquad & \text{if }\frac{\tilde{\alpha}(x)}{\tilde{\alpha}(x)+\tilde{\beta}(x)} \geq 0.5, \\
		0 \qquad & \text{otherwise.}
	\end{cases}
	\label{eq:decision_approx}
\end{align}

To estimate the performance of a classifier, the agreement with the Bayes optimal classifier is used. The Bayes risk of a classification problem is minimized by the omniscient Bayes classifier:

\begin{definition}[Bayes risk and optimal classifier]\label{def:bayesopt}
For any $x\in \mathcal{X}$, the Bayes risk of a classifier $\tilde{s}:\mathcal{X}\rightarrow \{0,1\}$ is given by
\begin{equation}
	R(\tilde{s}, x) = \mathbb{P}_{s\sim \text{B}(\func(x))} \left[s \neq \tilde{s}(x)\right].
\end{equation}
The Bayes optimal classifier is given based on the underlying probability function $\func(x)$. The corresponding decision rule is
\begin{equation}
    s^*(x) = \mathbbm{1}_{\func(x)\geq0.5},
\end{equation}
where $\mathbbm{1}_{\{\cdot\}}$ denotes the indicator function. This decision rule incurs the following optimal Bayes risk:
\begin{equation}
    R^*(x) = R(s^*, x) = \min \{ \func(x), 1-\func(x) \}.
\end{equation}
\end{definition}

To relate the convergence in $L_1$ to Bayes risk, the following simple Lemma is useful and allows to establish convergence in Bayes risk in Thm.~\ref{Bayes-conv}.

\begin{lemma}\label{lem:clfl1}
Suppose $\text{B}(\cdot)$ denotes a Bernoulli distribution, $p, q \in [0, 1]$ and $s' \in \{0, 1\}$. Then we have
\begin{equation}
    \mathbb{P}_{s \sim \text{B}(p)} \left[ s \neq s' \right] \leq \mathbb{P}_{s \sim \text{B}(q)} \left[ s \neq s' \right] + \left| p - q \right|, \label{eq:boundclfl1}
\end{equation}
which relates the misclassification directly to $\ell_1$ loss.
\end{lemma}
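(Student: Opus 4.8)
The plan is to reduce the claimed bound to a single application of the triangle inequality. The central observation is that, because $s' \in \{0,1\}$ and $p \in [0,1]$, the misclassification probability of a Bernoulli test collapses to an absolute value. First I would write out the Bernoulli mass function, $\mathbb{P}_{s\sim\text{B}(p)}[s=1]=p$ and $\mathbb{P}_{s\sim\text{B}(p)}[s=0]=1-p$, and then note that $\mathbb{P}_{s\sim\text{B}(p)}[s\neq s'] = s'(1-p)+(1-s')p = |s'-p|$. Indeed this equals $1-p$ when $s'=1$ and $p$ when $s'=0$, and in both cases it agrees with $|s'-p|$ precisely because $s'$ is an integer in $\{0,1\}$ while $p\in[0,1]$.

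Once both misclassification probabilities are identified as $|s'-p|$ and $|s'-q|$ respectively, the result is immediate: by the triangle inequality $|s'-p| \leq |s'-q| + |q-p|$, which is exactly \eqref{eq:boundclfl1} after noting $|q-p|=|p-q|$. Equivalently, one could run a two-case argument on the value of $s'$: in the case $s'=1$ the gap between the two sides equals $q-p$, and in the case $s'=0$ it equals $p-q$, so in either case the gap is bounded above by $|p-q|$. This case analysis makes transparent why no constraint relating $p$ and $q$ is needed.

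Since the entire argument rests on an elementary identity followed by the triangle inequality, there is no substantive obstacle here. The only point worth stating carefully is the identity $\mathbb{P}_{s\sim\text{B}(p)}[s\neq s'] = |s'-p|$, which hinges on $s'$ being a $0/1$-valued label; after that the bound follows at once, and the $\ell_1$ interpretation of the misclassification gap announced in the lemma becomes manifest. This lemma then feeds directly into Theorem~\ref{Bayes-conv}, where it converts the $L_1$ convergence of Corollary~\ref{L1-theorem} into convergence of the Bayes risk.
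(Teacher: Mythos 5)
Your proof is correct and essentially matches the paper's: the paper argues by the same two-case analysis on $s'$ (with subcases on whether $p\geq q$), which your identity $\mathbb{P}_{s\sim \text{B}(p)}[s\neq s'] = |s'-p|$ followed by the triangle inequality merely packages more compactly. No gaps.
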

\begin{proof}
	Suppose $s'=1$. Then the left-hand side is $p$ and the right-hand side gives $q + |p-q|$. If $p>=q$, we have for the right-hand side $q + p - q = p$ and equality holds. If $p<q$, we have for the right-hand side $q + q - p$ and $2p \leq 2q$ by the assumption $p<q$. The same argument works for $s'=0$ by symmetry.
\end{proof}

\begin{theorem}[Convergence in Bayes risk]\label{Bayes-conv}
Under the assumptions of Theorems~\ref{Static-convergence-theorem}, \ref{Dynamic-convergence-theorem}, and \ref{Dynamic-convergence-theorem-general}, the classifier in Eq.~\eqref{eq:decision_approx} based on the posterior parameters obtained by the corresponding Algorithms~\ref{algo-static-setting}, \ref{algo-dynamic-setting}, and \ref{algo-dynamic-setting-general} uniformly converges to the risk of the Bayes optimal classifier $s^*$, i.e. for any $x\in \mathcal{X}$:
	\begin{equation}
 		\mathbb{E}_{\mathcal S} \left[ R(\tilde{s}, x) \right]  
		\leq R^*(x) + \mathcal{O}\left(t^{-\frac{1}{d+2}}\right), \label{eq:class_guarantee}
	\end{equation}\end{theorem}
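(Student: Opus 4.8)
The plan is to reduce the excess Bayes risk pointwise (for each fixed realization of $\mathcal{S}$) to the $L_1$ error already controlled by Corollary~\ref{L1-theorem}, and only then take expectations over $\mathcal{S}$. First I would record that the decision rule in~\eqref{eq:decision_approx} simply thresholds the posterior mean $\hat{p}(x) := \frac{\tilde{\alpha}(x)}{\tilde{\alpha}(x)+\tilde{\beta}(x)} = \mathbb{E}[\tilde{\func}(x\mid\mathcal{S})]$ at $1/2$. Consequently $\tilde{s}(x\mid\mathcal{S})$ is exactly the Bayes-optimal classifier for a surrogate Bernoulli parameter equal to $\hat{p}(x)$, which gives the comparison $\mathbb{P}_{s\sim\text{B}(\hat{p}(x))}[s\neq\tilde{s}(x)] \leq \mathbb{P}_{s\sim\text{B}(\hat{p}(x))}[s\neq s^*(x)]$, since $s^*$ is one competing decision.

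The key step is then to transport both of these misclassification probabilities from the surrogate parameter $\hat{p}(x)$ to the true parameter $\func(x)$ by applying Lemma~\ref{lem:clfl1} twice. Applying it with $(p,q,s')=(\func(x),\hat{p}(x),\tilde{s}(x))$ bounds the true risk by $R(\tilde{s},x) \leq \mathbb{P}_{s\sim\text{B}(\hat{p}(x))}[s\neq\tilde{s}(x)] + |\func(x)-\hat{p}(x)|$. Chaining with the surrogate-optimality inequality above, and then applying Lemma~\ref{lem:clfl1} once more with $(p,q,s')=(\hat{p}(x),\func(x),s^*(x))$ to rewrite $\mathbb{P}_{s\sim\text{B}(\hat{p}(x))}[s\neq s^*(x)]$ in terms of $R^*(x)=\mathbb{P}_{s\sim\text{B}(\func(x))}[s\neq s^*(x)]$, I obtain the clean pointwise bound $R(\tilde{s},x) \leq R^*(x) + 2\,|\hat{p}(x)-\func(x)|$, valid for every realization of $\mathcal{S}$.

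Finally I would take $\mathbb{E}_{\mathcal{S}}$ and control $\mathbb{E}_{\mathcal{S}}\,|\hat{p}(x)-\func(x)|$. Because $\hat{p}(x)=\mathbb{E}[\tilde{\func}(x\mid\mathcal{S})]$ is a posterior mean, Jensen's inequality yields $|\hat{p}(x)-\func(x)| = |\mathbb{E}[\tilde{\func}(x\mid\mathcal{S})-\func(x)]| \leq \mathbb{E}\,|\tilde{\func}(x\mid\mathcal{S})-\func(x)|$, so that $\mathbb{E}_{\mathcal{S}}\,|\hat{p}(x)-\func(x)| \leq \mathbb{E}_{\mathcal{S}}\big(\mathbb{E}\,|\tilde{\func}(x\mid\mathcal{S})-\func(x)|\big)$. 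This right-hand side is precisely the $L_1$ quantity bounded uniformly in $x$ by $\mathcal{O}(t^{-\frac{1}{d+2}})$ in Corollary~\ref{L1-theorem}, which applies verbatim under the hypotheses of any of the three theorems; multiplying by the constant $2$ leaves the rate unchanged and yields~\eqref{eq:class_guarantee}.

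The proof is short, so the only real content lies in the chaining of inequalities, and I expect the one point that deserves care to be the very first reduction: one must justify that thresholding the posterior mean $\hat{p}(x)$ is genuinely Bayes-optimal for the surrogate $\text{B}(\hat{p}(x))$ (so that the factor-$2$ loss, and nothing worse, is incurred), and that the two applications of Lemma~\ref{lem:clfl1} are set up in the correct direction so the absolute values accumulate additively rather than canceling. Everything downstream—Jensen and the invocation of Corollary~\ref{L1-theorem}—is routine.
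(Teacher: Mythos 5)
Your proposal is correct and follows essentially the same route as the paper's proof: bound the excess risk pointwise by $2\left|\hat{p}(x)-\func(x)\right|$ via Lemma~\ref{lem:clfl1} together with the optimality of thresholding the posterior mean for the surrogate Bernoulli, then invoke the $L_1$ rate of Corollary~\ref{L1-theorem}. Your second application of the lemma is equivalent to the paper's use of the $1$-Lipschitzness of $u\mapsto\min\{u,1-u\}$, and your explicit Jensen step relating $\left|\hat{p}(x)-\func(x)\right|$ to $\mathbb{E}\left|\tilde{\func}(x|\mathcal{S})-\func(x)\right|$ is a welcome clarification that the paper leaves implicit.
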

	where constants of the respective theorems are left out (see Sec.~\ref{sec-app:conv-l1}).
\begin{proof}
Using Lemma~\ref{lem:clfl1}, we have the following for any $x \in \mathcal{X}$:
\begin{align}
    R(\tilde{s}, x) = 
    \mathbb{P}_{s \sim \text{B}(\func(x))} \left[ s \neq \tilde{s}(x|\mathcal{S}) \right]
    &\leq \mathbb{P}_{s \sim \text{B}(\tilde{\func}(x|\mathcal{S}))} \left[ s \neq \tilde{s}(x|\mathcal{S}) \right] + \left |\tilde{\func}(x|\mathcal{S}) - \func(x) \right| \nonumber \\
    &= \min \{ \tilde{\func}(x|\mathcal{S}), 1-\tilde{\func}(x|\mathcal{S})\} + \left |\tilde{\func}(x|\mathcal{S}) - \func(x) \right| \nonumber \\
    &\leq  \min \{ \func(x), 1-\func(x) \} + 2 \left |\tilde{\func}(x|\mathcal{S}) - \func(x) \right| \nonumber \\
    &= R^*(x) + 2 \left |\tilde{\func}(x|\mathcal{S}) - \func(x) \right|
\end{align}
Now, we can apply the convergence in $L_1$ of Corollary~\ref{L1-theorem} and get the desired result:
\begin{align}
    \mathbb{E}_{\mathcal{S}} \left( \mathbb{P}_{s\sim \text{B}(\func(x))} \left( s \neq \tilde{s} (x|\mathcal{S}) \right) \right) \leq R^*(x) + \mathcal{O}\left(t^{-\frac{1}{d+2}}\right).
\end{align}
\end{proof}

\subsection{Related methods and practical considerations}
Smooth Beta processes are designed for probability function approximation, in which case the estimation of the standard deviation on top of the function approximation is useful. In the particular static classification setting, SBPs are tightly connected to the fixed-radius nearest neighbors (\emph{NN}) classifier. SBPs have the advantage to specify a prior, which is useful to incorporate knowledge or combat biased data. In contrast to fixed-radius NN, SBPs perform additive smoothing like the famous Krichevsky-Trofimov estimator~\cite{krichevsky1981performance} by adding \emph{pseudo-counts}. Despite the introduced bias, SBPs converge optimally to the Bayes classifier: the rate proven in Thm.~\ref{Bayes-conv} matches the lower-bound established by \citet{audibert2007fast} for classification. 

On a practical side, faster inference methods are available due to the algorithmic fixed-radius nearest neighbors problem. Both exact (e.g. $k$-d and ball trees) and approximate (e.g. hashing-based) methods can be used for faster inference schemes. For further practical considerations and background on the fixed-radius NN algorithm, we refer to \citet{chen2018explaining}.

We conduct a synthetic experiment in order to show how the specification of a prior can help in the low data regime. We compare the convergence of SBP with various priors and the standard fixed-radius NN algorithm. For an informative prior, we set the prior $\tilde{\func}(x) \sim \text{Beta}(\alpha(x), \beta(x))$ such that $\mathbb{E}[\tilde{\func}(x)] = \func(x)$ and $\mathbb{V}[\tilde{\func}(x)] = v$. In Fig.~\ref{fig:prior-conv}, we compare the convergence for different values of $v$: in the low data regime, SBPs can profit strongly from an informative prior. With increasing number of observations, the approximation quality varies less as we expect it to happen for a Bayesian method. Asymptotically, the convergence rate is the same.

\begin{figure*}[t!]
	\centering
		\includegraphics[width=6cm]{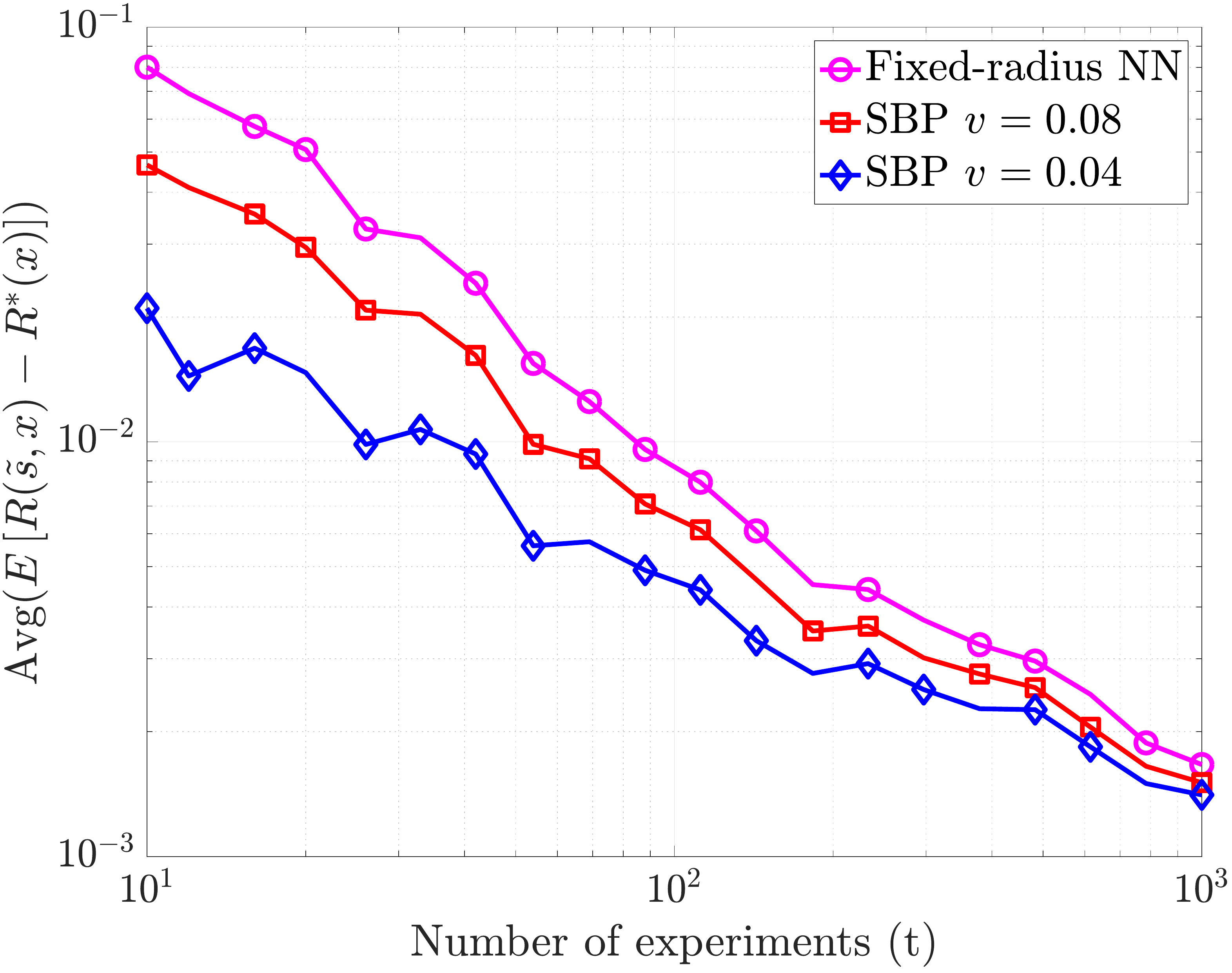}
		\caption{Bayes risk of SBP with specified informative prior, which is identical to the underlying function $\func(x)$, compared to fixed-radius NN which can not specify a prior in its standard framework.}
		\label{fig:prior-conv}
\end{figure*}

 }
 \fi

\end{document}